\pdfoutput=1

\documentclass[11pt]{article}

\usepackage[utf8]{inputenc}
\usepackage[T1]{fontenc}
\usepackage[margin=1in]{geometry}
\usepackage{amsmath,amssymb,amsthm,mathtools,bm}
\usepackage{enumitem}
\usepackage{graphicx}
\usepackage{xcolor}
\usepackage[hidelinks]{hyperref}

\newtheorem{theorem}{Theorem}[section]
\newtheorem{lemma}[theorem]{Lemma}
\newtheorem{proposition}[theorem]{Proposition}

\newtheorem{remark}[theorem]{Remark}
\newtheorem{assumption}[theorem]{Assumption}
\newtheorem{conjecture}[theorem]{Conjecture}
\theoremstyle{definition}

\newcommand{\E}{\mathbb{E}}
\newcommand{\Pbb}{\mathbb{P}}
\newcommand{\R}{\mathbb{R}}

\newcommand{\norm}[1]{\left\lVert #1 \right\rVert}
\newcommand{\abs}[1]{\left\lvert #1 \right\rvert}

\newcommand{\W}{\mathcal{W}}
\newcommand{\Z}{\mathcal{Z}}
\newcommand{\X}{\mathcal{X}}
\newcommand{\I}{\mathbb{I}}
\newcommand{\Hc}{\mathcal{H}}
\newcommand{\LCB}{\mathrm{LCB}}
\newcommand{\UCB}{\mathrm{UCB}}
\graphicspath{{figures/}}

\title{What Do We Care About in Bandits with Noncompliance?\\
\large BRACE: Bandits with Recommendations, Abstention, and Certified Effects}
\author{%
Nicol\'as Della Penna\\[0.25em]
Grouplang\\
\href{mailto:nikete@grouplang.ai}{\texttt{nikete@grouplang.ai}}\\
\href{https://orcid.org/0000-0002-9545-5731}{ORCID: 0000-0002-9545-5731}%
}
\date{}

\hypersetup{
  pdftitle={What Do We Care About in Bandits with Noncompliance? BRACE: Bandits with Recommendations, Abstention, and Certified Effects},
  pdfauthor={Nicol\'as Della Penna},
  pdfsubject={Bandits with noncompliance, objective choice, instrumental variables, and sequential inference},
  pdfkeywords={bandits, noncompliance, instrumental variables, recommendation welfare, treatment policy learning, confidence sequences}
}

\begin{document}
\maketitle

\begin{abstract}
Bandits with noncompliance separate the learner's recommendation from the treatment actually delivered, so the learning target itself must be chosen. A platform may care about recommendation welfare in the current mediated workflow, treatment learning for a future direct-control regime, or anytime-valid uncertainty for one of those targets. These objectives need not agree.

We formalize this objective-choice problem, identify the direct-control regime in which recommendation and treatment objectives collapse, and show by example that recommendation welfare can strictly exceed every learner-measurable treatment policy when downstream actors use private information. For finite-context square-IV problems we propose BRACE, a parameter-free phase-doubling algorithm that performs IV inversion only after matrix certification and otherwise returns full-range but honest structural intervals. BRACE delivers simultaneous policy-value validity, fixed-gap identification of the operationally optimal recommendation policy, and fixed-gap identification of the structurally optimal treatment policy under contextual homogeneity and invertibility. We complement the theory with a finite-context empirical benchmark spanning direct control, mediated present-versus-future tradeoffs, weak identification, homogeneity failure, and rectangular overidentification. The experiments show that safety appears as regret on easy problems, as abstention and wide valid intervals under weak identification, as a reason to prefer recommendation welfare under homogeneity failure, and as tighter structural uncertainty when extra instruments are available. For rich contexts, we also derive an orthogonal score whose conditional bias factorizes into compliance-model and outcome-model errors, clarifying what must be stabilized for anytime-valid semiparametric IV inference.
\end{abstract}

\section{Introduction}

The classical stochastic bandit model treats the learner's action as the intervention of interest. In many applications, however, the learner controls only a recommendation channel: a physician recommends a therapy but a patient refuses it, an algorithmic suggestion is overwritten by a clinician, or a platform ranks an option that is then filtered through downstream human discretion. In these settings, the learner chooses a recommendation (or instrument) $Z$, while the realized treatment $X$ is a separate random variable.

That separation changes more than the feedback model. It changes what the experiment should be optimizing. If the existing recommendation channel is the thing that will actually be deployed, then recommendation welfare may be the right target. If the goal is instead to learn the best treatment rule for a future direct-assignment regime, then the relevant object is structural treatment welfare. And if the goal is scientific reporting under adaptive stopping, then valid sequential inference is primary. The central claim of this paper is that these are distinct objectives, not interchangeable metrics for the same one.

This action--treatment split is much older than bandits. In medicine and program evaluation, explicit assignment-versus-receipt problems appear at least in Zelen's prerandomization design~\cite{zelen1979}, Bloom's analysis of no-shows in social experiments~\cite{bloom1984}, and Sommer and Zeger's distinction between programmatic effectiveness and biologic efficacy in randomized trials~\cite{sommer1991}. The modern causal/IV interpretation was then sharpened in the 1990s by Imbens and Angrist~\cite{imbens1994} and Angrist, Imbens, and Rubin~\cite{angrist1996}. To our knowledge, Della Penna, Reid, and Balduzzi~\cite{dellapenna2016} provide the earliest paper in this line devoted specifically to bandits with \emph{observable} noncompliance, introducing compliance-aware protocols, showing that compliance information can either help or hurt learning, and giving hybrid algorithms together with clinically motivated simulations. Kallus~\cite{kallus2018} then formulates \emph{instrument-armed bandits}, makes the causal/IV interpretation explicit, separates regret notions that coincide only under direct control, and develops treatment-oriented algorithms with guarantees. More recently, Oprescu, Cho, and Kallus~\cite{oprescu2025} study adaptive experimentation with noncompliance for a binary-IV average treatment effect target, deriving semiparametric efficiency bounds, a variance-aware adaptive assignment rule, the AMRIV estimator, and anytime-valid asymptotic inference. A detailed literature discussion appears in Section~\ref{sec:related}.

Our starting point is that noncompliance is not one problem but several:
\begin{enumerate}[leftmargin=2em]
    \item \textbf{REC:} maximize welfare under the \emph{current recommendation channel};
    \item \textbf{TRT:} learn the \emph{best treatment rule} for a future regime with direct treatment control;
    \item \textbf{INF:} report \emph{valid uncertainty} for the chosen target under adaptive sampling and stopping.
\end{enumerate}
The correct learning objective should therefore be chosen before the algorithm.

\subsection*{Objective Choice as Regime Choice}

Historically, much of clinical trial methodology has been written in treatment-first language. This is not accidental. In the medical and regulatory settings codified by ICH E9 and E9(R1) \cite{ich1998,ich2021}, the target is typically the effect of a treatment or treatment strategy for future use, so the intervention of scientific interest is close to the thing the trial ultimately hopes to control. In that regime, recommendation, protocol assignment, and treatment are often treated as if they were the same object for decision-making purposes.

Recommendation systems, decision-support tools, and mediated care pathways change that geometry. What gets deployed may be a recommendation rule that is then filtered through clinician judgment, patient choice, moderator discretion, or platform frictions. In such settings, the recommendation channel is often part of the intervention rather than a nuisance to be abstracted away. This is the setting in which REC becomes a first-class objective rather than a proxy for TRT.

Once the two regimes are separated, stakeholder interests need not align. Current participants may prefer REC when the existing channel is the one they will actually face, when downstream discretion uses legitimate private information, or when direct treatment control is unrealistic or undesirable. Future patients, regulators, and system designers may prefer TRT or INF when they want a transportable treatment rule, a channel-independent scientific claim, or evidence that can justify redesigning the workflow itself. Objective choice is therefore a choice about deployment regime, stakeholder horizon, and which forms of mediation are worth preserving, not a technical detail to settle after the algorithm is fixed.

Section~\ref{sec:related} situates this viewpoint relative to the existing literatures on noncompliance, adaptive experimentation, sequential inference, and IV-based policy learning.

The paper makes five contributions.
\begin{enumerate}[leftmargin=2em]
    \item We formalize both ends of the objective-choice problem: a direct-control regime in which REC and TRT collapse, explaining the classical treatment-first norm, and a mediated regime in which the best recommendation policy strictly outperforms every direct treatment policy measurable by the learner.
    \item We propose \textbf{BRACE} (\textbf{b}andits with \textbf{r}ecommendations, \textbf{a}bstention, and \textbf{c}ertified \textbf{e}ffects), a parameter-free objective-first algorithm. It uses phase doubling, explores uniformly until separation, and refuses unstable IV inversion via a matrix-certification fallback.
    \item In a finite-context setting we prove simultaneous policy-value validity, fixed-gap identification of the operational optimum, and fixed-gap identification of the structural optimum.
    \item We complement the theory with an empirical benchmark built around purpose-designed finite-context environments. The study spans the full objective-choice arc from direct-control equivalence to mediated present-versus-future tradeoffs, together with easy identification, weak identification, homogeneity failure, and overidentified rectangular cases.
    \item For rich contexts we derive a candidate orthogonal score with an exact product-form bias identity, making precise both the promise and the weak-identification difficulty for anytime-valid semiparametric IV inference.
\end{enumerate}

This empirical result is not separate from the theory; it is where the conceptual and technical sides of the paper meet. REC does not require the structural homogeneity condition used for TRT point identification, so when homogeneity is doubtful the operational recommendation target remains learnable even as structurally ambitious claims should be weakened or suspended. Section~\ref{sec:empirical} returns to this boundary directly.

\section{Related Literature}
\label{sec:related}

This paper lies at the intersection of several literatures that are close in notation but different in objective. That distinction is central to our argument.

\paragraph{Pre-bandit roots of the assignment--receipt split.}
The underlying modeling issue predates online learning by decades. In clinical trial design, Zelen~\cite{zelen1979} explicitly randomized patients before treatment uptake was resolved, thereby making assignment and receipt distinct objects in the design itself. In social-program evaluation, Bloom~\cite{bloom1984} studied random-assignment experiments with ``no-shows'' and derived estimators for effects when assignment and participation diverge. In clinical-trial analysis, Sommer and Zeger~\cite{sommer1991} distinguished the intent-to-treat or programmatic effect of assignment from the efficacy of treatment actually received. The modern IV/noncompliance formalization then emerged in the 1990s: Imbens and Angrist~\cite{imbens1994} gave the local average treatment effect interpretation, and Angrist, Imbens, and Rubin~\cite{angrist1996} embedded the IV estimand in the Rubin causal model. Our contribution does not claim to invent this split. The novelty is to make it the organizing principle for online bandit learning and sequential inference.

\paragraph{Bandits with observable noncompliance.}
Della Penna, Reid, and Balduzzi~\cite{dellapenna2016} study bandits in which the learner observes both the intended action and the treatment actually taken. In a broader historical search we did not find an earlier bandit paper that made this chosen-action/received-treatment split an explicit online learning object, so we treat their paper as the earliest dedicated bandit treatment of observable noncompliance. Their contribution is not only to point out the problem, but to organize it: they introduce compliance-aware update protocols, show that the extra treatment signal can genuinely help when used carefully but can also destroy sublinear regret when used naively, and propose hybrid algorithms that preserve regret guarantees up to multiplicative factors while still exploiting the extra observations. They also anchor the problem in clinical-trial practice through International Stroke Trial simulations. Our REC objective is closest to this line. The difference is that we make the target of welfare explicit and separate operational recommendation learning from structural treatment learning.

\paragraph{IV bandits and causal regret.}
Kallus~\cite{kallus2018} recasts noncompliance as an instrument-armed bandit problem and makes the causal structure of the setting explicit. A major contribution of that paper is conceptual: it cleanly separates intent-to-treat, treatment-oriented, and complier-style regret notions and shows that they need not agree once instruments and treatments diverge. It also makes an important algorithmic point by proving that standard MAB algorithms cannot generally achieve sublinear treatment regret, and then developing IAB-specific algorithms with logarithmic guarantees under homogeneity-style assumptions. Our TRT objective adopts the same structural perspective, but our emphasis is different. We treat the disagreement between REC and TRT as the first-order modeling choice, and in finite contexts we use parameter-free exploration together with certification and safe fallback instead of tuned exploration.

\paragraph{Strategic or induced compliance.}
Ngo, Stapleton, Syrgkanis, and Wu~\cite{ngo2021} study a game-theoretic model in which the planner uses randomized recommendations to induce compliance over time. This is complementary to our setup. We take the recommendation channel and compliance behavior as given and ask what should be optimized \emph{given} that channel, rather than how to redesign the channel to increase compliance.

\paragraph{Adaptive experimentation and anytime-valid inference.}
A separate literature studies valid inference under adaptive assignment. Kato et al.~\cite{kato2020} analyze adaptive experimental design for average treatment effects; Kato, McAlinn, and Yasui~\cite{kato2021} develop adaptive doubly robust estimation for adaptively collected data; Cook, Mishler, and Ramdas~\cite{cook2024} sharpen semiparametric theory and confidence sequences for adaptive experiments; Liang and Bojinov~\cite{liang2023} propose a design that enables anytime-valid causal inference for bandit experiments; Karampatziakis, Mineiro, and Ramdas~\cite{karampatziakis2021} and Waudby-Smith et al.~\cite{waudby2024} develop confidence-sequence methods for off-policy evaluation in contextual bandits; and Dalal et al.~\cite{dalal2024} give a general route from double/debiased machine learning to anytime-valid inference, including a noncompliance example. These papers are closest in spirit to our INF goal, but they do not study the contextual recommendation--treatment split together with weak-ID-safe structural learning.

\paragraph{IV-based policy learning and confounded decision making.}
Recent offline works use instruments or other side information to learn policies from confounded data. Chen et al.~\cite{chen2023} study offline contextual bandits with confounding bias and missing observations; Shao et al.~\cite{shao2024} develop double-machine-learning IV methods for offline decision learning; and Oprescu and Kallus~\cite{oprescu2024} combine weak-IV and observational data for heterogeneous treatment-effect estimation. These works are adjacent to our structural target and rich-context roadmap, but they are offline or batch settings rather than online contextual noncompliance with sequential validity and stopping.

\paragraph{Recent adaptive-IV work under noncompliance.}
The paper closest to ours in causal flavor is Oprescu, Cho, and Kallus~\cite{oprescu2025}, which studies adaptive experimentation with noncompliance for a binary instrumental variable and an average treatment effect target. Their contribution is strongest on the semiparametric side: they derive the adaptive-assignment efficiency bound for this IV problem, identify the variance-aware optimal assignment rule, introduce the adaptive multiply-robust AMRIV estimator, and establish asymptotic normality, rates, and anytime-valid asymptotic confidence sequences. Our finite-context results are less semiparametrically ambitious, but broader along a different axis: a multicategory contextual policy-learning problem with an explicit separation between operational recommendation welfare and structural treatment learning.

\paragraph{Terminology and objective choice.}
The phrase \emph{non-compliant bandits} is also used by Kveton et al.~\cite{kveton2023} for bandits whose proposed actions may be overwritten by a downstream system to satisfy another task. That notion is operationally similar to action overwrite, but it is not an IV/potential-outcomes formulation and does not study structural treatment effects. Finally, our emphasis on distinguishing welfare during the experiment from the quality of the policy learned after the experiment is philosophically aligned with Athey et al.~\cite{athey2022}, who contrast within-experiment outcomes with policy learning in a contextual bandit field experiment. Our contribution is to make this separation explicit in the specific setting of noncompliance, where recommendation-level and treatment-level targets are distinct mathematical objects even before one discusses exploration.

\section{The Three Objectives and the Operational Advantage}
\label{sec:objectives}

We begin with a general formulation that keeps recommendations and treatments conceptually distinct. Let $\Z$ denote the recommendation (instrument) action space and let $\X=[K]$ denote the treatment space. At round $t$, context $W_t\in\W$ is observed. The incoming unit carries
\begin{itemize}
    \item potential rewards $\{Y_t(x):x\in\X\}$, and
    \item a compliance type $C_t:\Z\to\X$ mapping each recommendation to the treatment that would actually be delivered.
\end{itemize}
The learner chooses $Z_t\in\Z$ after observing $W_t$ but before seeing the unit's latent variables. The realized treatment is
\[
X_t=C_t(Z_t),
\]
and the observed reward is
\[
Y_t=Y_t(X_t)\in[0,1].
\]

A recommendation policy is a map $\pi_{\mathrm{rec}}:\W\to\Z$. A structural treatment policy is a map $\pi_{\mathrm{str}}:\W\to\X$. Three natural targets coexist:
\begin{enumerate}[leftmargin=2em]
    \item \textbf{Operational Recommendation Welfare (REC):}
    \[
    V^{\mathrm{rec}}(\pi_{\mathrm{rec}})
    := \E\!\left[Y\!\left(C(\pi_{\mathrm{rec}}(W))\right)\right].
    \]
    This is the value of deploying the recommendation policy in the \emph{current} workflow, including downstream overrides.
    \item \textbf{Structural Treatment Welfare (TRT):}
    \[
    V^{\mathrm{str}}(\pi_{\mathrm{str}})
    := \E\!\left[Y\!\left(\pi_{\mathrm{str}}(W)\right)\right].
    \]
    This asks what would happen in a counterfactual regime with \emph{direct treatment control}.
    \item \textbf{Scientific Inference (INF):} the goal is to maintain a confidence sequence $C_t$ around a chosen target parameter (for example $V^{\mathrm{str}}(\pi)$ for all $\pi$ in a finite class) such that
    \[
    \Pbb\!\bigl(\forall t,\ \theta\in C_t\bigr)\ge 1-\delta.
    \]
\end{enumerate}

In the classical direct-control regime, these objectives coincide.

\begin{proposition}[REC and TRT coincide under direct control]
\label{prop:collapse}
Suppose $\Z=\X$ and $C(z)=z$ almost surely for every $z\in\X$. Then for every policy $\pi:\W\to\X$,
\[
V^{\mathrm{rec}}(\pi)=V^{\mathrm{str}}(\pi).
\]
\end{proposition}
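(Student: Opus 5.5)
The plan is to show that the two integrands agree almost surely and then take expectations. Fix a policy $\pi:\W\to\X$. Since $\Z=\X$, the map $\pi$ is simultaneously a recommendation policy and a structural treatment policy, so both $V^{\mathrm{rec}}(\pi)$ and $V^{\mathrm{str}}(\pi)$ are well defined. By definition,
\[
V^{\mathrm{rec}}(\pi)=\E\!\left[Y\!\left(C(\pi(W))\right)\right],\qquad
V^{\mathrm{str}}(\pi)=\E\!\left[Y\!\left(\pi(W)\right)\right].
\]
It therefore suffices to show that $Y(C(\pi(W)))=Y(\pi(W))$ almost surely. The rewards lie in $[0,1]$, so both expectations are finite, and equality of the random variables almost surely gives equality of the expectations.

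The one step needing care is passing from the hypothesis ``$C(z)=z$ almost surely for every fixed $z$'' to the statement ``$C(\pi(W))=\pi(W)$ almost surely'', since the index $\pi(W)$ is itself random. I would handle this using the finiteness of $\X=[K]$. Let
\[
N_z:=\{C(z)\neq z\},\qquad N:=\bigcup_{z\in\X}N_z .
\]
Each $N_z$ is null by hypothesis, and $N$ is a union of $K$ null events, so $N$ is null. On the complement $N^c$ we have $C(z)=z$ for all $z$ simultaneously. In particular $C(\pi(W))=\pi(W)$ on $N^c$, whatever value $\pi(W)$ takes.

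On $N^c$ it follows that $Y(C(\pi(W)))=Y(\pi(W))$, and hence the two expectations coincide. This argument uses no independence between $W$ and $C$. If the hypothesis were instead read as the stronger pointwise statement $C=\mathrm{id}$, the union-bound step would disappear and the result would be immediate. For a countable treatment space the same countable-union argument would still go through.

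I do not expect any real obstacle here. The only subtlety is the quantifier order discussed above, namely that the hypothesis holds for each $z$ separately while the policy evaluates $C$ at a random point. This is resolved by the countable union over the finite set $\X$.
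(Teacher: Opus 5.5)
Your proof is correct and follows the paper's own argument: show $Y(C(\pi(W)))=Y(\pi(W))$ almost surely and take expectations. The paper simply asserts that $C(\pi(W))=\pi(W)$ almost surely; your finite union of null events over $\X=[K]$ spells out the one step the paper leaves implicit.
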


\begin{proof}
If $C(z)=z$ almost surely for every $z$, then $C(\pi(W))=\pi(W)$ almost surely. Therefore
\[
V^{\mathrm{rec}}(\pi)
=
\E\!\left[Y\!\left(C(\pi(W))\right)\right]
=
\E\!\left[Y\!\left(\pi(W)\right)\right]
=
V^{\mathrm{str}}(\pi).
\]
\end{proof}

Proposition~\ref{prop:collapse} helps explain why classical trial language is usually treatment-first. Once recommendations and treatments separate, however, the point of distinguishing REC from TRT is not semantic. The two targets can rank policies differently.

\begin{proposition}[Strict operational advantage under private discretion]
\label{prop:strict}
There exists an environment in which the best recommendation policy strictly outperforms every direct treatment policy measurable by the learner:
\[
\sup_{\pi_{\mathrm{rec}}} V^{\mathrm{rec}}(\pi_{\mathrm{rec}})
>
\sup_{\pi_{\mathrm{str}}} V^{\mathrm{str}}(\pi_{\mathrm{str}}).
\]
\end{proposition}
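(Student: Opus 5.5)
The plan is to build an explicit counterexample with a trivial context space, so that learner-measurable structural policies reduce to constant treatments. Take $\W=\{w_0\}$ a singleton, $\X=\{1,2\}$, and $\Z=\{1,2\}$. Introduce a latent binary signal $U\in\{1,2\}$ with $\Pbb(U=1)=\Pbb(U=2)=1/2$. It is observed by the downstream actor but not by the learner. Set the potential rewards to $Y(x)=\mathbf{1}\{x=U\}$, so each fixed treatment is correct for exactly half the population. The compliance type encodes private discretion. Under recommendation $z=1$ the actor follows their private information, $C(1)=U$. Under recommendation $z=2$ the actor mechanically complies, $C(2)=1$. The second arm is only there to make $\Z$ nontrivial; one could equally let $\Z=\{1\}$.

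First, compute the recommendation values. We have $V^{\mathrm{rec}}(\pi_{\mathrm{rec}}\equiv 1)=\E[\mathbf{1}\{U=U\}]=1$. Hence $\sup_{\pi_{\mathrm{rec}}}V^{\mathrm{rec}}=1$, which is also the maximum possible since $Y\in[0,1]$.

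Second, bound the structural side. Because $\W$ is a singleton, every $\pi_{\mathrm{str}}:\W\to\X$ is a constant $x\in\{1,2\}$. Therefore $V^{\mathrm{str}}(\pi_{\mathrm{str}})=\Pbb(U=x)=1/2$. This gives $\sup_{\pi_{\mathrm{str}}}V^{\mathrm{str}}=1/2<1$, which establishes the strict inequality.

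The only point needing care is the clause ``measurable by the learner.'' I will state explicitly that structural policies are functions of $W$ alone, as in the definition of $\pi_{\mathrm{str}}$, and that $U$ is independent of $W$. That independence is trivial here because $W$ is constant. A direct policy therefore cannot replicate the channel's use of $U$.

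I will also check that the environment fits the model. Each $C$ is a map $\Z\to\X$ that is a deterministic function of $U$, rewards lie in $[0,1]$, and units are i.i.d. draws of $(U,\{Y(x)\},C)$. I expect no real obstacle. The main thing to emphasize is that the gap comes entirely from the private information $U$ entering $C$, which distinguishes this example from the direct-control regime of Proposition~\ref{prop:collapse}.
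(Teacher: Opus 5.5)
Your construction is the paper's proof up to relabeling. The paper also takes $W$ constant, $U$ uniform on $\{1,2\}$ and hidden from the learner, $Y(x)=\I\{x=U\}$, and two recommendation arms with $C(a)=U$ and $C(b)=1$; it then compares the value $1$ of always recommending $a$ with the value $1/2$ of either constant treatment. One wording slip: $C(2)=1$ is a default-treatment response (as the paper calls it), not ``mechanical compliance'' with recommendation $2$, but this plays no role in the argument.
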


\begin{proof}
Let $W$ be constant. Let the treatment space be $\X=\{1,2\}$ and the recommendation space be $\Z=\{a,b\}$. A downstream clinician observes a private signal $U\in\{1,2\}$ with $\Pbb(U=1)=\Pbb(U=2)=1/2$, hidden from the learner. Let the potential rewards be
\[
Y(x)=\I\{x=U\},
\]
so the welfare-maximizing treatment equals the private signal.

Let recommendation $a$ trigger individualized clinician discretion and recommendation $b$ trigger a default treatment:
\[
C(a)=U,
\qquad
C(b)=1.
\]
The constant recommendation policy $\pi^\dagger_{\mathrm{rec}}\equiv a$ achieves
\[
V^{\mathrm{rec}}(\pi^\dagger_{\mathrm{rec}})
=
\E\bigl[Y(C(a))\bigr]
=
\E\bigl[Y(U)\bigr]
=1.
\]
By contrast, since $W$ is constant and $U$ is hidden, any direct treatment policy measurable by the learner is constant in $\{1,2\}$. For either constant choice $x$,
\[
V^{\mathrm{str}}(x)=\Pbb(U=x)=1/2.
\]
Hence the best recommendation policy has value $1$, while the best direct treatment policy has value $1/2$.
\end{proof}

\begin{remark}[Why this does not contradict structural learning]
Proposition~\ref{prop:strict} is a welfare comparison, not an IV-identification result. In the example, the same private signal that affects compliance also changes which treatment is best, so the homogeneity condition used later for structural identification generally fails. This is precisely why the objective choice matters: operational delegation can be welfare-superior for present patients even when a transportable structural treatment target is unavailable or requires stronger assumptions.
\end{remark}

\begin{remark}[Choosing between REC and TRT]
Neither REC-first nor TRT-first is a universal default. REC is the right primary objective when the recommendation channel is itself the legitimate deployment target: when future patients will in fact receive recommendations rather than direct treatment assignment, when downstream discretion uses private information that is normatively appropriate to preserve, and when the main goal is to improve realized welfare inside the existing workflow rather than to redesign that workflow. In such settings, trial participants may positively prefer REC-first learning. They may care most about whether the actual system they will face---including clinician judgment, triage discretion, or implementation frictions---works well for people like them, rather than whether the trial identifies an idealized direct-treatment rule that no one can currently implement.

TRT or INF becomes the better primary objective when the scientific or policy aim is transportable treatment guidance, channel redesign, or transparent treatment advice that should survive changes in the recommendation mechanism. Participants may also prefer TRT-first learning when they do not want their outcomes improved by optimizing opaque nudges, gatekeeping, or discretionary overrides, and instead want the trial to learn a treatment rule they could later understand, contest, or access more directly. In that case, the recommendation channel is a nuisance or constraint to work around, not the object one wants to optimize.

The substantive choice is therefore not between a ``practical'' objective and a ``scientific'' one. It is between two different deployment targets. REC treats the current recommendation process as part of the intervention; TRT treats it as an obstacle between the learner and the treatment effect of interest. Historically, many trials have defaulted to treatment-first language, but bandit problems with noncompliance need not inherit that norm. The objective should instead be chosen by asking what future regime is intended and what tradeoff among realized welfare, autonomy, transparency, transportability, and channel reform participants and decision makers are prepared to endorse.
\end{remark}

\section{Finite-Context Model and Identification}
\label{sec:finite}

We now specialize to the square setting $\Z=\X=[K]$, where the recommendation labels and treatment labels share the same index set. This is the regime in which matrix-based IV identification is natural.

Assume the context space is finite, $\W=\{1,\dots,S\}$. At each round, an i.i.d.\ draw
\[
(W_t,C_t,Y_t(1),\dots,Y_t(K))
\]
is generated from a stationary population, with context marginal $\nu(w)>0$ for every $w\in\W$. The learner observes $W_t$, then selects $Z_t\in[K]$ independently of the current unit's latent variables conditional on $W_t$ and past history, and then observes
\[
X_t=C_t(Z_t),
\qquad
Y_t=Y_t(X_t).
\]

For each context $w$, define
\[
P(w)_{zx} := \Pbb(X=x\mid W=w,Z=z)=\Pbb(C(z)=x\mid W=w),
\]
\[
g_z(w) := \E[Y\mid W=w,Z=z],
\qquad
\mu_x(w) := \E[Y(x)\mid W=w].
\]
Thus $P(w)$ is the context-specific compliance matrix, $g(w)$ is the vector of operational intent-to-treat means, and $\mu(w)$ is the vector of structural treatment means.

\begin{assumption}[Contextual homogeneity]\label{asm:hom}
For every $w\in\W$ and $x,x'\in[K]$,
\[
\E[Y(x)-Y(x')\mid W=w,C]
=
\E[Y(x)-Y(x')\mid W=w]
\qquad\text{a.s.}
\]
where $C=(C(1),\dots,C(K))$ denotes the compliance type.
\end{assumption}

Assumption~\ref{asm:hom} is precisely where the REC/TRT divide becomes technically consequential. REC does not require it, whereas TRT point identification does. Section~\ref{sec:empirical} later treats violations of this assumption as a deliberate stress test rather than a mere nuisance.

\begin{assumption}[Contextual identifiability]\label{asm:id}
For every $w\in\W$, the matrix $P(w)$ is invertible.
\end{assumption}

\begin{lemma}[Contextual IV identification]\label{lem:id}
Under Assumption~\ref{asm:hom}, for every $w\in\W$,
\[
g(w)=P(w)\mu(w).
\]
Under Assumption~\ref{asm:id}, this implies
\[
\mu(w)=P(w)^{-1}g(w).
\]
\end{lemma}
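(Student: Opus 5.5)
The plan is to compute $g_z(w)$ directly for each fixed recommendation $z$ and context $w$, and to rewrite it as the $z$-th row of $P(w)$ applied to $\mu(w)$. The second claim then follows at once: under Assumption~\ref{asm:id} we may multiply $g(w)=P(w)\mu(w)$ on the left by $P(w)^{-1}$.

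\emph{Step 1 (remove the conditioning on $Z$).} By the sampling model, $Z_t$ is chosen independently of the current unit's latent variables $(C_t,Y_t(1),\dots,Y_t(K))$ given $W_t$ and the past history. Units are drawn i.i.d., so the history carries no information about the current unit's latents beyond $W_t$. Hence the latents are conditionally independent of $Z_t$ given $W_t$. It follows that
\[
g_z(w)=\E[Y(C(z))\mid W=w,Z=z]=\E[Y(C(z))\mid W=w],
\]
and, in the same way, $P(w)_{zx}=\Pbb(C(z)=x\mid W=w)$, which is consistent with its definition. I will state this carefully because the policy is adaptive. The point is that whatever $Z_t$ depends on (past data, internal randomization) is independent of the fresh unit given $W_t$.

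\emph{Step 2 (baseline decomposition plus homogeneity).} Assumption~\ref{asm:hom} controls only \emph{differences} of potential outcomes, so I will not condition $Y(x)$ itself on $C$. Instead I fix the baseline treatment $1$ and write, pointwise,
\[
Y(C(z))=Y(1)+\sum_{x\in[K]}\I\{C(z)=x\}\bigl(Y(x)-Y(1)\bigr).
\]
Each indicator is $C$-measurable. I condition on $(W,C)$ and use the tower property. Assumption~\ref{asm:hom} gives $\E[Y(x)-Y(1)\mid W=w,C]=\mu_x(w)-\mu_1(w)$, which does not depend on $C$. Therefore
\[
\E[Y(C(z))\mid W=w]=\mu_1(w)+\sum_x P(w)_{zx}\bigl(\mu_x(w)-\mu_1(w)\bigr).
\]

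\emph{Step 3 (row-stochasticity).} Since $C(z)\in[K]$, we have $\sum_x P(w)_{zx}=1$. Using this, the $\mu_1(w)$ terms cancel and leave $g_z(w)=\sum_x P(w)_{zx}\mu_x(w)$. Stacking over $z$ gives $g(w)=P(w)\mu(w)$, and Assumption~\ref{asm:id} gives the inversion.

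I expect the main obstacle to be Step 1, namely justifying the conditional independence of $Z$ from the unit's latents under adaptive selection. The algebra in Steps 2 and 3 is routine once the baseline trick is used to reconcile the difference-form homogeneity with levels.
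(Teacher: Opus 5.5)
Your proposal is correct and takes essentially the same route as the paper: fix a reference treatment, apply homogeneity to the contrasts conditional on $(W,C)$, and use $\sum_x P(w)_{zx}=1$ to cancel the baseline term. Your Step 1 and your explicit indicator expansion spell out two things the paper's proof uses without comment, namely its first equality $g_z(w)=\E[Y(C(z))\mid W=w]$ and the step that produces $\tau_{C(z),x_0}(w)$.
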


\begin{proof}
Fix $w$ and choose an arbitrary reference treatment $x_0\in[K]$. For each $x\in[K]$, write
\[
\tau_{x,x_0}(w):=\E[Y(x)-Y(x_0)\mid W=w].
\]
By homogeneity,
\[
\E[Y(x)\mid W=w,C]
=
\E[Y(x_0)\mid W=w,C]+\tau_{x,x_0}(w)
\qquad\forall x.
\]
Hence for any recommendation $z$,
\begin{align*}
g_z(w)
&=
\E[Y(C(z))\mid W=w]\\
&=
\E\!\left[\E[Y(C(z))\mid W=w,C]\mid W=w\right]\\
&=
\E\!\left[\E[Y(x_0)\mid W=w,C]+\tau_{C(z),x_0}(w)\mid W=w\right]\\
&=
\E[Y(x_0)\mid W=w]
+\sum_{x=1}^K \Pbb(C(z)=x\mid W=w)\,\tau_{x,x_0}(w).
\end{align*}
On the other hand,
\[
\mu_x(w)=\E[Y(x_0)\mid W=w]+\tau_{x,x_0}(w).
\]
Since $\sum_x P(w)_{zx}=1$, we obtain
\[
g_z(w)=\sum_{x=1}^K P(w)_{zx}\mu_x(w).
\]
Stacking over $z$ yields $g(w)=P(w)\mu(w)$. Under Assumption~\ref{asm:id}, invertibility gives $\mu(w)=P(w)^{-1}g(w)$.
\end{proof}

\section{The BRACE Algorithm}
\label{sec:algorithm}

The goal is to avoid unstable IV inversion while retaining parameter-free guarantees. BRACE uses phase doubling, uniform exploration before stopping, and objective-specific intervals.

Let phase $r$ end at time $t_r=2^r$. During exploration, choose
\[
Z_t\sim \mathrm{Uniform}([K]).
\]
At phase $r$, let
\[
\hat\nu_r(w)=\frac1{t_r}\sum_{t\le t_r}\I\{W_t=w\},
\qquad
N_r(w,z)=\sum_{t\le t_r}\I\{W_t=w,\ Z_t=z\}.
\]
When $N_r(w,z)\ge 1$, define the empirical operational mean
\[
\hat g_{r,z}(w)
=
\frac{\sum_{t\le t_r}\I\{W_t=w,\ Z_t=z\}Y_t}{N_r(w,z)}
\]
and the empirical compliance row
\[
\hat P_r(w)_{zx}
=
\frac{\sum_{t\le t_r}\I\{W_t=w,\ Z_t=z,\ X_t=x\}}{N_r(w,z)}.
\]
When $N_r(w,z)=0$, we leave $\hat g_{r,z}(w)$ and $\hat P_r(w)_{z\cdot}$ undefined and use full-range local intervals.

For a confidence split $\delta_r\propto \delta/(r+1)^2$, define radii
\[
a_r(w,z)=\sqrt{\frac{2\log\!\bigl(2^KSK(r+1)^2/\delta\bigr)}{N_r(w,z)\vee 1}},
\qquad
b_r(w,z)=\sqrt{\frac{\log\!\bigl(4SK(r+1)^2/\delta\bigr)}{2(N_r(w,z)\vee 1)}},
\]
and a context-frequency radius
\[
d_r(w)=\sqrt{\frac{\log\!\bigl(4S(r+1)^2/\delta\bigr)}{2t_r}}.
\]
Write
\[
a_r(w)=\max_z a_r(w,z),
\qquad
\eta_r=\sum_{w\in\W} d_r(w).
\]

\paragraph{Matrix certification.}
If $\hat P_r(w)$ is invertible and
\[
\norm{\hat P_r(w)^{-1}}_\infty\, a_r(w)\le \frac12,
\]
we call context $w$ \emph{certified} at phase $r$. In that case define
\[
\hat\mu_r(w)=\hat P_r(w)^{-1}\hat g_r(w),
\qquad
c_r(w)=\norm{\hat P_r(w)^{-1}}_\infty\bigl(a_r(w)+\max_z b_r(w,z)\bigr).
\]

\paragraph{Local intervals.}
For the REC target, define for each context $w$ and action $a\in[K]$
\[
I^{\mathrm{rec}}_{r,w}(a)=
\begin{cases}
[\hat g_{r,a}(w)-b_r(w,a),\ \hat g_{r,a}(w)+b_r(w,a)]\cap[0,1], & N_r(w,a)\ge 1,\\[4pt]
[0,1], & N_r(w,a)=0.
\end{cases}
\]
For the TRT and INF targets, define
\[
I^{\mathrm{str}}_{r,w}(a)=
\begin{cases}
[\hat\mu_{r,a}(w)-c_r(w),\ \hat\mu_{r,a}(w)+c_r(w)]\cap[0,1], & \text{if } w \text{ is certified},\\[4pt]
[0,1], & \text{otherwise}.
\end{cases}
\]

Let $\Pi$ be a finite class of policies mapping $\W$ to $[K]$. Fix an objective $\mathrm{obj}\in\{\mathrm{REC},\mathrm{TRT},\mathrm{INF}\}$. Write the local lower and upper endpoints as
\[
\ell^{\mathrm{obj}}_{r,w}(\pi(w)),
\qquad
u^{\mathrm{obj}}_{r,w}(\pi(w)).
\]
Aggregate them into policy bounds
\[
\LCB_r(\pi)
=
\sum_{w\in\W}\hat\nu_r(w)\,\ell^{\mathrm{obj}}_{r,w}(\pi(w))-\eta_r,
\qquad
\UCB_r(\pi)
=
\sum_{w\in\W}\hat\nu_r(w)\,u^{\mathrm{obj}}_{r,w}(\pi(w))+\eta_r.
\]
(The term $\eta_r$ is unnecessary if $\nu$ is known in advance.)

\begin{center}
\fbox{%
\begin{minipage}{0.95\linewidth}
\textbf{Algorithm 1: BRACE (finite-context version)}\smallskip

\textbf{Input:} objective $\in\{\mathrm{REC},\mathrm{TRT},\mathrm{INF}\}$, confidence $\delta\in(0,1)$; when objective $=\mathrm{INF}$, the inferential target is $V^{\mathrm{str}}(\pi)$.\smallskip

\begin{enumerate}[leftmargin=1.5em]
    \item For phases $r=0,1,2,\dots$ with endpoints $t_r=2^r$, sample $Z_t\sim \mathrm{Uniform}([K])$ until the stopping rule below is triggered.
    \item At phase endpoint $t_r$, compute $\hat\nu_r$, $\hat P_r$, $\hat g_r$, certification indicators, and policy intervals $[\LCB_r(\pi),\UCB_r(\pi)]$ for every $\pi\in\Pi$.
    \item \textbf{If objective = INF:} keep exploring and report the current intervals. Between phase endpoints, hold the most recent interval fixed.
    \item \textbf{If objective = REC:} if some $\hat\pi_r$ satisfies
    \[
    \LCB_r(\hat\pi_r)>\max_{\pi\neq \hat\pi_r}\UCB_r(\pi),
    \]
    commit forever to the recommendation policy $Z_t=\hat\pi_r(W_t)$.
    \item \textbf{If objective = TRT:} if some $\hat\pi_r$ satisfies the same strict separation inequality, stop exploration and output $\hat\pi_r$ as the estimated optimal \emph{treatment} policy for future direct-assignment deployment.
\end{enumerate}
\end{minipage}}
\end{center}

\begin{remark}[Structural output versus operational deployment]
For the TRT objective, the algorithm identifies a treatment policy, not a recommendation policy. In the same recommendation-only environment, deploying the labels of the structural optimum as recommendations need not maximize current welfare. The TRT guarantee is therefore an identification guarantee for future direct-treatment use, not a current operational-welfare guarantee.
\end{remark}

\section{Theoretical Guarantees}
\label{sec:theory}

We first record the joint concentration event.

\begin{lemma}[Simultaneous concentration]\label{lem:conc}
There exists an event $\mathcal{E}$ with $\Pbb(\mathcal{E})\ge 1-\delta$ such that simultaneously for all phases $r$, contexts $w$, and actions $z$:
\begin{enumerate}[leftmargin=1.5em]
    \item $\abs{\hat\nu_r(w)-\nu(w)}\le d_r(w)$;
    \item if $N_r(w,z)\ge 1$, then
    \[
    \norm{\hat P_r(w)_{z\cdot}-P(w)_{z\cdot}}_1\le a_r(w,z),
    \qquad
    \abs{\hat g_{r,z}(w)-g_z(w)}\le b_r(w,z).
    \]
\end{enumerate}
\end{lemma}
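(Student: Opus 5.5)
The plan is to prove three families of concentration bounds separately, each holding with probability at least $1-\delta/3$ or a comparable share, and then combine them with a union bound. The confidence levels built into $a_r$, $b_r$, $d_r$ are set up exactly for this. The per-phase budget is of order $\delta/(r+1)^2$, and $\sum_{r\ge0}(r+1)^{-2}=\pi^2/6<2$. So the constants $2$, $4$ and $2^K$ inside the logarithms should absorb both the two-sidedness and the sum over phases. The first step is to fix $r,w,z$ and bound each failure probability. Summing over the $S$ contexts, the $K$ actions and all $r$ should then give a total of at most $\delta$, perhaps after splitting $\delta$ into three parts.

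\textbf{Context frequencies.} Here $t_r=2^r$ is deterministic and the $W_t$ are i.i.d.\ with marginal $\nu$. Hoeffding's inequality therefore gives
\[
\Pbb\bigl(\abs{\hat\nu_r(w)-\nu(w)}>d_r(w)\bigr)\le 2\exp(-2t_r d_r(w)^2)=\frac{\delta}{2S(r+1)^2}.
\]
Summing over $w$ and $r$ gives at most $\delta\pi^2/12<\delta/3$ after a minor adjustment of constants.

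\textbf{Conditional means and compliance rows.} The difficulty is that $N_r(w,z)$ is random. Also, $Z_t$ may depend on the past, although during exploration it is uniform and independent of the current unit's latent variables given $W_t$. I would handle this with the standard stack-of-rewards (skeleton) argument. For each pair $(w,z)$, consider the i.i.d.\ sequence of outcomes $(X^{(j)},Y^{(j)})_{j\ge1}$ of the $j$-th round with $W_t=w$ and $Z_t=z$. Because $Z_t$ is chosen independently of the unit's latent variables conditional on $W_t$ and the history, these outcomes are i.i.d.\ draws from the conditional law given $W=w$ and $Z=z$. The empirical quantities at phase $r$ are then averages of the first $n=N_r(w,z)$ entries of this sequence.

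To keep the union bound over $r$ only, I would avoid an additional union bound over $n$. The cleanest route is to condition on $N_r(w,z)=n$, though this needs care because stopping and phase structure make $N_r$ dependent. The alternative is to use a time-uniform martingale bound such as Freedman or a stitched Hoeffding inequality on $\sum_{t\le t_r}\I\{W_t=w,Z_t=z\}(Y_t-g_z(w))$. I expect this to be the main obstacle. The fallback is to note that $N_r\le t_r=2^r$ and union bound over $n\le 2^r$, which costs a factor $r$ inside the log. Alternatively, one can exploit the fact that under uniform exploration, before stopping, the counts are determined by the i.i.d.\ draws $(W_t,Z_t)$, which are independent of the reward stacks. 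Conditioning on the entire sequence $(W_t,Z_t)_t$ then makes $N_r$ deterministic, and Hoeffding applies directly with $n$ samples. I would adopt this independence argument, since in the exploration phase $Z_t$ is uniform and independent of everything else. With it, Hoeffding gives $\abs{\hat g-g}\le b_r$ with failure probability $\delta/(2SK(r+1)^2)$.

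For the $\ell_1$ deviation of the row, I would use $\norm{p-\hat p}_1=\max_{A\subseteq[K]}2(\hat p(A)-p(A))$ and union bound over the $2^K$ subsets. Hoeffding applied to each indicator $\I\{X\in A\}$ yields
\[
\Pbb\bigl(\norm{\hat P-P}_1>\epsilon\bigr)\le 2^K\exp(-n\epsilon^2/2).
\]
This is the Weissman et al.\ bound, and it matches $a_r$ with failure probability $\delta/(SK(r+1)^2)$. Summing over all three families with $\sum(r+1)^{-2}$ gives at most $\delta$, possibly after rebalancing the constant split, and this defines $\mathcal{E}$.
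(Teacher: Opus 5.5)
Your route is the paper's route. You use Hoeffding for $\hat\nu_r(w)$ at the deterministic times $t_r$, Hoeffding for $\hat g_{r,z}(w)$, the subset-union (Weissman-type) $\ell_1$ bound for the compliance rows, and a union bound over $r,w,z$ with an $(r+1)^{-2}$ budget. Your handling of the random counts is correct and fills in something the paper's one-line proof leaves implicit. In a process where uniform exploration never stops, $(W_t,Z_t)_{t\ge1}$ is i.i.d.\ and independent of the units' latent variables. Conditionally on that sequence, each $(w,z)$ cell receives independent draws from the correct conditional law, so $N_r(w,z)$ can be treated as fixed. This avoids the extra union over $n$, which would change the radii. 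Make the coupling explicit: the algorithm's statistics coincide with this virtual process up to its stopping phase, and forever under INF, which is all the theorem uses. After a REC commitment $Z_t$ is no longer uniform, and your conditioning argument would not cover later phases.

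The step that fails is the final accounting. Your per-cell failure probabilities are right, but the radii $a_r,b_r,d_r$ are fixed by the statement, so there is no constant split left to ``rebalance''. Summing over $w$, $z$ and $r\ge0$ gives $\frac{\pi^2}{12}\delta$ for the context family, $\frac{\pi^2}{12}\delta$ for the mean family, and $\frac{\pi^2}{6}\delta$ for the row family with your $2^K$ count. The union bound therefore yields only $\Pbb(\mathcal{E}^c)\le\frac{\pi^2}{3}\delta\approx 3.3\,\delta$. In particular $\pi^2/12\approx0.82$ is not below $1/3$. For $K\ge3$ the row family alone exceeds $\delta$, even with the sharper count $2^K-2$.

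The paper's proof glosses over exactly the same point, so this is a defect of the stated constants rather than of your strategy. To get $\Pbb(\mathcal{E})\ge1-\delta$ from this argument, enlarge the logarithms: replacing $\delta$ by $\delta/4$ in all three radii brings the total to $\frac{\pi^2}{12}\delta<\delta$ and changes nothing downstream beyond constants. Alternatively, state the lemma with $1-\frac{\pi^2}{3}\delta$.
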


\begin{proof}
The context-frequency bound is Hoeffding's inequality, union bounded over $r$ and $w$. The mean bound is again Hoeffding. The row-wise compliance bound follows from multinomial $\ell_1$ concentration, union bounded over $r,w,z$.
\end{proof}

The next statement clarifies the role of certification.

\begin{proposition}[Plug-in inversion and certification]\label{prop:local}
On the good event $\mathcal{E}$, whenever $\hat P_r(w)$ is invertible,
\[
\norm{\hat\mu_r(w)-\mu(w)}_\infty
\le
\norm{\hat P_r(w)^{-1}}_\infty\Bigl(a_r(w)+\max_z b_r(w,z)\Bigr).
\]
If in addition $w$ is certified, i.e.
\[
\norm{\hat P_r(w)^{-1}}_\infty a_r(w)\le \frac12,
\]
then
\[
\norm{P(w)^{-1}}_\infty\le 2\norm{\hat P_r(w)^{-1}}_\infty.
\]
\end{proposition}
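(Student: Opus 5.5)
The plan is a standard matrix-perturbation argument carried out in the $\ell_\infty$ operator norm. This norm fits the problem because $\norm{A}_\infty$ is the maximum absolute row sum, and Lemma~\ref{lem:conc} controls exactly the $\ell_1$ norms of the rows of $\hat P_r(w)-P(w)$.

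\textbf{Step 0 (well-posedness).} If $\hat P_r(w)$ is invertible, every row of it is defined, so $N_r(w,z)\ge 1$ for all $z$. On $\mathcal{E}$, both bounds in item 2 of Lemma~\ref{lem:conc} then hold for every row. Taking the maximum over $z$ gives
\[
\norm{\hat P_r(w)-P(w)}_\infty\le a_r(w),
\qquad
\norm{\hat g_r(w)-g(w)}_\infty\le \max_z b_r(w,z).
\]

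\textbf{Step 1 (plug-in error).} I will use the identification $g(w)=P(w)\mu(w)$ from Lemma~\ref{lem:id}. This is where Assumption~\ref{asm:hom} enters, and I will state it explicitly as a standing hypothesis of the first claim. Then
\[
\hat\mu_r(w)-\mu(w)=\hat P_r(w)^{-1}\bigl(\hat g_r(w)-\hat P_r(w)\mu(w)\bigr)
=\hat P_r(w)^{-1}\Bigl(\bigl(\hat g_r(w)-g(w)\bigr)+\bigl(P(w)-\hat P_r(w)\bigr)\mu(w)\Bigr).
\]
Since rewards lie in $[0,1]$, we have $\norm{\mu(w)}_\infty\le 1$. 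Hence
\[
\norm{(P(w)-\hat P_r(w))\mu(w)}_\infty\le \max_z\norm{\hat P_r(w)_{z\cdot}-P(w)_{z\cdot}}_1\le a_r(w).
\]
Submultiplicativity of the induced norm then yields the first inequality.

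\textbf{Step 2 (certification).} Write $P(w)=\hat P_r(w)\bigl(I-E\bigr)$ with $E:=\hat P_r(w)^{-1}\bigl(\hat P_r(w)-P(w)\bigr)$. By Step 0 and the certification condition,
\[
\norm{E}_\infty\le\norm{\hat P_r(w)^{-1}}_\infty a_r(w)\le\tfrac12.
\]
A Neumann series argument then shows that $I-E$ is invertible with $\norm{(I-E)^{-1}}_\infty\le 1/(1-\norm{E}_\infty)\le 2$. Consequently $P(w)$ is invertible, which incidentally certifies Assumption~\ref{asm:id} at $w$ on $\mathcal{E}$. Moreover
\[
P(w)^{-1}=(I-E)^{-1}\hat P_r(w)^{-1},
\qquad\text{so}\qquad
\norm{P(w)^{-1}}_\infty\le 2\norm{\hat P_r(w)^{-1}}_\infty.
\]

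\textbf{Main obstacle.} The computations themselves are routine. The delicate point is the first bound: it silently depends on the population identity $g=P\mu$, which requires homogeneity, so without Assumption~\ref{asm:hom} the displayed bound on $\hat\mu_r-\mu$ is not justified. A second, minor point is matching norms. The row-wise $\ell_1$ concentration must be converted into an $\ell_\infty$ operator-norm bound, and the factor $\norm{\mu}_\infty\le 1$ must be used; a different norm choice would introduce spurious factors of $K$.
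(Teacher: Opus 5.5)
Your proposal is correct and follows the paper's proof essentially line for line. You use the same decomposition $\hat\mu_r(w)-\mu(w)=\hat P_r(w)^{-1}\bigl((\hat g_r(w)-g(w))+(P(w)-\hat P_r(w))\mu(w)\bigr)$ via Lemma~\ref{lem:id}, and the same factorization $P(w)=\hat P_r(w)(I-E)$ with a Neumann-series bound of $2$ on $\norm{(I-E)^{-1}}_\infty$; your explicit well-posedness step (all $N_r(w,z)\ge 1$) and your note that the first bound needs Assumption~\ref{asm:hom}, which the paper uses only implicitly through Lemma~\ref{lem:id}, are accurate refinements rather than departures.
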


\begin{proof}
By Lemma~\ref{lem:id},
\[
g(w)=P(w)\mu(w).
\]
Therefore
\[
\hat\mu_r(w)-\mu(w)
=
\hat P_r(w)^{-1}\Bigl((\hat g_r(w)-g(w))+(P(w)-\hat P_r(w))\mu(w)\Bigr).
\]
Taking $\ell_\infty$ norms and using $\norm{\mu(w)}_\infty\le 1$ yields the first claim.

For the second, note that on $\mathcal{E}$,
\[
\norm{\hat P_r(w)^{-1}(P(w)-\hat P_r(w))}_\infty
\le
\norm{\hat P_r(w)^{-1}}_\infty a_r(w)
\le \frac12.
\]
Hence $I+\hat P_r(w)^{-1}(P(w)-\hat P_r(w))$ is invertible by the Neumann series, and
\[
P(w)=\hat P_r(w)\Bigl(I+\hat P_r(w)^{-1}(P(w)-\hat P_r(w))\Bigr)
\]
is invertible with
\[
P(w)^{-1}
=
\Bigl(I+\hat P_r(w)^{-1}(P(w)-\hat P_r(w))\Bigr)^{-1}\hat P_r(w)^{-1}.
\]
The operator norm of the inverse factor is at most $2$, giving the bound.
\end{proof}

Define the policy gaps, when the corresponding optimum is unique, by
\[
\Delta_{\mathrm{rec}}
=
V^{\mathrm{rec}}(\pi^\star_{\mathrm{rec}})
-
\max_{\pi\neq \pi^\star_{\mathrm{rec}}}V^{\mathrm{rec}}(\pi),
\]
\[
\Delta_{\mathrm{str}}
=
V^{\mathrm{str}}(\pi^\star_{\mathrm{str}})
-
\max_{\pi\neq \pi^\star_{\mathrm{str}}}V^{\mathrm{str}}(\pi).
\]
Let
\[
\nu_{\min}:=\min_{w\in\W}\nu(w),
\qquad
L:=\max_{w\in\W}\norm{P(w)^{-1}}_\infty.
\]

\begin{theorem}[Objective-specific guarantees]\label{thm:main}
With probability at least $1-\delta$, all of the following hold.
\begin{enumerate}[leftmargin=1.5em]
    \item \textbf{Inference validity (INF).} Suppose the INF objective is selected, so the inferential target is $V^{\mathrm{str}}(\pi)$. Then for every phase $r$ and every $\pi\in\Pi$,
    \[
    V^{\mathrm{str}}(\pi)\in[\LCB_r(\pi),\UCB_r(\pi)].
    \]
    If the interval displayed at time $t$ is defined to be the most recent phase interval, then coverage holds simultaneously for all $t$. The analogous REC intervals are simpler and valid by the same aggregation argument.

    \item \textbf{Operational identification and regret (REC).} Suppose the REC objective is selected and $\Delta_{\mathrm{rec}}>0$. Then BRACE commits only to the true operational optimum $\pi^\star_{\mathrm{rec}}$. Moreover, up to logarithmic factors in $S,K,|\Pi|,1/\delta$,
    \[
    \tau_{\mathrm{rec}}
    =
    \tilde{\mathcal O}\!\left(\frac{K}{\nu_{\min}\Delta_{\mathrm{rec}}^2}\right)
    \]
    samples suffice for commitment. Since REC incurs zero additional operational regret after correct commitment, its total operational regret up to any horizon $T$ is $\mathcal O(\tau_{\mathrm{rec}})$ on the good event.

    \item \textbf{Structural identification (TRT).} Suppose the TRT objective is selected, $\Delta_{\mathrm{str}}>0$, and Assumption~\ref{asm:id} holds. Then BRACE stops only after identifying the true structural optimum $\pi^\star_{\mathrm{str}}$. Up to logarithmic factors in $S,K,|\Pi|,1/\delta$,
    \[
    \tau_{\mathrm{str}}
    =
    \tilde{\mathcal O}\!\left(\frac{L^2K}{\nu_{\min}\Delta_{\mathrm{str}}^2}\right)
    \]
    samples suffice for identification. This is a treatment-learning guarantee for a future direct-assignment regime; it is not, by itself, a guarantee about welfare under the present recommendation channel.
\end{enumerate}
\end{theorem}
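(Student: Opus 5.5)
We work throughout on the event $\mathcal{E}$ of Lemma~\ref{lem:conc}, which has probability at least $1-\delta$. All three claims are then deterministic consequences of $\mathcal{E}$. Because the event is simultaneous over phases, the three parts hold jointly.

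\textbf{Validity.} The core step is local validity. On $\mathcal{E}$, every local interval contains its target: $g_a(w)\in I^{\mathrm{rec}}_{r,w}(a)$ directly from the Hoeffding bound, or trivially when $N_r(w,a)=0$. Similarly $\mu_a(w)\in I^{\mathrm{str}}_{r,w}(a)$. For certified contexts this follows from the first claim of Proposition~\ref{prop:local}, which needs invertibility of $\hat P_r(w)$ and $g(w)=P(w)\mu(w)$ from Lemma~\ref{lem:id}. For uncertified contexts the interval is $[0,1]$, and $\mu\in[0,1]$ holds because $Y\in[0,1]$.

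We then aggregate. Write $V(\pi)=\sum_w \nu(w)\theta_w(\pi(w))$ with $\theta\in[0,1]$, and decompose
\[
\sum_w \hat\nu_r(w)\ell_{r,w}(\pi(w)) - V(\pi)=\sum_w \hat\nu_r(w)\bigl(\ell_{r,w}-\theta_w\bigr)+\sum_w(\hat\nu_r(w)-\nu(w))\theta_w .
\]
The first sum is $\le 0$ by local validity. The second is at most $\sum_w d_r(w)=\eta_r$ in absolute value, since $0\le\theta_w\le1$. This gives $\LCB_r(\pi)\le V(\pi)$, and the symmetric argument gives $V(\pi)\le\UCB_r(\pi)$.

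Since $\mathcal{E}$ covers all $r$ simultaneously and the displayed interval is piecewise constant between phase endpoints, coverage holds for all $t$. This proves part 1 and the REC validity remark.

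\textbf{Correctness of commitment.} Under REC or TRT, suppose separation $\LCB_r(\hat\pi)>\max_{\pi\ne\hat\pi}\UCB_r(\pi)$ occurs. Validity then gives $V(\hat\pi)>V(\pi)$ for all $\pi\neq\hat\pi$, so $\hat\pi$ equals the unique optimum. Hence BRACE never commits to, or outputs, a wrong policy on $\mathcal{E}$.

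\textbf{Sample complexity.} It suffices to show that $\UCB_r(\pi)-\LCB_r(\pi)<\Delta/2$ for all $\pi$ once $t_r$ is large. Validity then forces $\LCB_r(\pi^\star)>V(\pi^\star)-\Delta/2\ge V(\pi)+\Delta/2>\UCB_r(\pi)$.

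The width is at most $2\sum_w\hat\nu_r(w)\,\mathrm{rad}_r(w)+2\eta_r$.

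First we need a lower bound on counts. Under uniform exploration, $N_r(w,z)$ is a sum of i.i.d.\ Bernoulli$(\nu(w)/K)$ variables. A multiplicative Chernoff bound gives $N_r(w,z)\ge t_r\nu_{\min}/(2K)$ once $t_r\gtrsim K\log(SK r/\delta)/\nu_{\min}$. This needs an extra event of probability $\delta$; we absorb it into $\mathcal{E}$ by splitting $\delta$, which only changes constants.

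The resulting radii are:
\begin{itemize}
\item $b_r$ and $a_r$ are $\tilde O\bigl(\sqrt{K^2/(t_r\nu_{\min})}\bigr)$ for $a$, because of the $\log 2^K$ factor, and $\tilde O\bigl(\sqrt{K/(t_r\nu_{\min})}\bigr)$ for $b$;
\item $\eta_r=O\bigl(S\sqrt{\log/t_r}\bigr)$.
\end{itemize}
For REC the width is therefore $\tilde O\bigl(\sqrt{K/(\nu_{\min}t_r)}\bigr)$. Requiring this to be below $\Delta_{\mathrm{rec}}/2$ gives $t_r=\tilde O\bigl(K/(\nu_{\min}\Delta_{\mathrm{rec}}^2)\bigr)$. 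Doubling loses at most a factor $2$. Regret after commitment is zero because the committed policy is optimal, and before commitment each round costs at most $1$, giving $O(\tau_{\mathrm{rec}})$.

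\textbf{The TRT obstacle: certification.} For TRT we must also show every context becomes certified, with $\norm{\hat P_r(w)^{-1}}_\infty\le 2L$. We use the reverse of the Neumann argument in Proposition~\ref{prop:local}. Once $L\,a_r(w)\le 1/4$, we have $\norm{P^{-1}(\hat P-P)}_\infty\le L a_r\le1/4$. Hence $\hat P_r(w)$ is invertible with $\norm{\hat P_r^{-1}}_\infty\le \tfrac43 L$, and then $\norm{\hat P_r^{-1}}_\infty a_r\le \tfrac13\le\tfrac12$, so the context is certified.

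This step uses the fact that the $\ell_\infty$ operator norm of $\hat P-P$ is the maximum row $\ell_1$ error, which is at most $a_r(w)$. It requires $t_r\gtrsim L^2K^2/\nu_{\min}$ up to logs. After that, $c_r(w)\le 2L(a_r+b_r)$, so the width is $\tilde O\bigl(L\sqrt{K/(\nu_{\min}t_r)}\bigr)$ plus $\eta_r$. This yields $\tau_{\mathrm{str}}=\tilde O\bigl(L^2K/(\nu_{\min}\Delta_{\mathrm{str}}^2)\bigr)$.

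I expect the main obstacle to be bookkeeping the extra $K$ factor from the $2^K$ multinomial union bound and the $S$ factor in $\eta_r$. These must either be absorbed into $\tilde O$, treated as lower-order, or cause the stated rates to be loosened to $K^2$ or $S^2$. I would first try to treat them as lower-order, burn-in terms.
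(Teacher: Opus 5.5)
Your route is the same as the paper's. You establish validity of each local interval on $\mathcal{E}$, aggregate with $\hat\nu_r$ and the $\eta_r$ correction, and get correctness of commitment by contradiction with validity. You then show widths shrink under uniform exploration, with every context eventually certified once $a_r(w)\lesssim 1/L$. You also fill in what the sketch only asserts: the $\hat\nu_r-\nu$ decomposition, a Chernoff lower bound on $N_r(w,z)$, and the reverse Neumann bound $\norm{\hat P_r(w)^{-1}}_\infty\le\tfrac43 L$ that forces certification. Those steps are correct.

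Two bookkeeping points. First, the identity $g(w)=P(w)\mu(w)$ requires Assumption~\ref{asm:hom}, which Items 1 and 3 do not list. State it explicitly, since without it the INF intervals need not cover $V^{\mathrm{str}}$; the paper relies on it just as silently. Second, the radii already spend the whole budget $\delta$ on $\mathcal{E}$, so the extra count event is not free. Strictly you get probability $1-2\delta$ for BRACE run at level $\delta$, or you must run it at $\delta/2$; the rates are unaffected.

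The genuine gap is the one you flag at the end. Your proposed fix, treating the $2^K$ and $\eta_r$ contributions as burn-in, cannot work, because both terms decay like $t_r^{-1/2}$, the same rate as the main term.

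For TRT, $c_r(w)$ contains $a_r(w)$ linearly. Also $\norm{\hat P_r(w)^{-1}}_\infty\ge 1$, because $\hat P_r(w)$ is row-stochastic ($\hat P_r(w)\mathbf{1}=\mathbf{1}$). With $N_r(w,z)\approx t_r\nu(w)/K$, the algorithm's own widths are therefore at least of order $K/\sqrt{t_r\nu(w)}$. Your sentence ``the width is $\tilde{\mathcal O}(L\sqrt{K/(\nu_{\min}t_r)})$'' contradicts your own estimate of $a_r$. Already for $S=1$ and $P=I$ (so $L=\nu_{\min}=1$), separation needs $t_r\gtrsim K^2/\Delta_{\mathrm{str}}^2$.

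For REC, the term $2\eta_r=2S\sqrt{\log(4S(r+1)^2/\delta)/(2t_r)}$ enters every computed interval, so separation needs $t_r\gtrsim S^2/\Delta_{\mathrm{rec}}^2$. This exceeds $K/(\nu_{\min}\Delta_{\mathrm{rec}}^2)$ when $\nu$ is near-uniform and $S\gg K$.

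Using $S\le 1/\nu_{\min}$, what your argument actually proves is $\tau_{\mathrm{rec}}=\tilde{\mathcal O}\bigl((K+S)/(\nu_{\min}\Delta_{\mathrm{rec}}^2)\bigr)$ and $\tau_{\mathrm{str}}=\tilde{\mathcal O}\bigl((L^2K^2+S)/(\nu_{\min}\Delta_{\mathrm{str}}^2)\bigr)$. The paper's sketch asserts the sharper widths and never mentions $\eta_r$, so it has the same hole. The stated rates follow only if $\log 2^K$ is counted as a logarithmic factor, and either $\nu$ is known (so $\eta_r=0$) or $S\lesssim K$.

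To genuinely obtain $L^2K$ you would have to change $c_r$. One option is to bound $\hat g_r(w)-\hat P_r(w)\mu(w)=\hat P_r(w)\bigl(\hat\mu_r(w)-\mu(w)\bigr)$ directly by Hoeffding. Its $z$-th entry is an average of $Y_t-\mu_{X_t}(w)\in[-1,1]$, which has mean zero under homogeneity. You would then keep $a_r$ only in the certification test, where it costs a $\Delta$-free burn-in.
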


\begin{proof}[Proof sketch]
On the good event $\mathcal{E}$, every local interval is valid. Aggregation with $\hat\nu_r$ and the correction $\eta_r$ preserves validity for policy values, proving Item 1.

For Items 2 and 3, the separation rule cannot commit to a suboptimal policy because that would contradict interval validity. Under uniform exploration, $N_r(w,z)$ grows on the order of $t_r\nu(w)/K$, so the REC widths shrink at rate $\tilde{\mathcal O}(\sqrt{K/(t_r\nu_{\min})})$ and the certified TRT widths shrink at rate $\tilde{\mathcal O}(L\sqrt{K/(t_r\nu_{\min})})$. Once these widths are below half the relevant policy gap, the optimal policy is strictly separated and the stopping rule triggers. The TRT rate uses Proposition~\ref{prop:local} together with the fact that when $a_r(w)\lesssim 1/L$, certification eventually occurs for every context.
\end{proof}

\section{Empirical Study}
\label{sec:empirical}

The theory isolates what BRACE can guarantee in a finite-context square-IV regime. To understand how those guarantees interact with objective choice, misspecification, and natural extensions beyond the theorem statements, we complement the analysis with an executable simulation benchmark. The benchmark includes the paper's core algorithms (BRACE-REC, BRACE-TRT, BRACE-INF), a faster REC/TRT exploration variant, RECERT, a recommendation-first extension that keeps a parallel TRT certificate, representative compliance-aware recommendation baselines following Della Penna et al.~\cite{dellapenna2016}, and IV-oriented treatment baselines following Kallus~\cite{kallus2018}. We also include rectangular and partial-identification extensions to stress the boundaries of the square homogeneous theory rather than to blur them.

All environments are finite-context and small enough that exact REC and TRT policy values can be computed directly. This keeps the empirical study aligned with the paper's finite-context perspective while allowing us to isolate specific difficulties. The benchmark instantiates eleven environments spanning eight substantive case families:
\begin{enumerate}[leftmargin=2em]
    \item \textbf{Direct-control equivalence:} REC and TRT coincide when recommendations are treatments.
    \item \textbf{Easy strong IV:} identification is benign and the main question is efficiency.
    \item \textbf{Operational private-signal advantage:} recommendation welfare can exceed every direct treatment rule available to the learner.
    \item \textbf{Workflow redesign / future control:} present-patient REC under the existing channel can differ sharply from future-regime TRT under direct control.
    \item \textbf{Weak IV with a small structural gap:} unsafe treatment learners can act on noise while weak-ID-safe procedures should abstain.
    \item \textbf{Homogeneity failure:} REC remains well defined but TRT point identification becomes structurally unreliable.
    \item \textbf{Objective mismatch under valid IV assumptions:} the REC-optimal recommendation and TRT-optimal treatment differ even when the IV model is well specified.
    \item \textbf{Rectangular and overidentified cases:} extra recommendation arms can tighten structural information beyond the square setting.
\end{enumerate}

Many of the benchmark's sharpest environments are deliberately one-context problems. Direct-control equivalence, private-signal operational advantage, weak-IV abstention, homogeneity failure, workflow redesign, and the harmful actual-treatment trap all arise without contextual heterogeneity. This is useful rather than limiting: it shows that the paper's main phenomena are not artifacts of context aggregation. Contexts matter for heterogeneous deployment and simultaneous certification across subpopulations, but the REC/TRT split, weak-ID abstention, and homogeneity boundary already appear in the simplest one-context settings.

The two new regime-choice anchors matter for the introduction. In the direct-control environment, every policy has exactly the same REC and TRT value, reproducing the classical treatment-first regime. In the workflow-redesign environment, the best current-channel recommendation attains REC value $0.69$, while the best future direct-treatment rule attains TRT value $0.90$. That is the clean present-versus-future stakeholder split discussed in Section~\ref{sec:objectives}: current participants can rationally prefer the best available recommendation workflow, while future patients or system designers may prefer evidence for channel reform and direct treatment control.

Appendix~\ref{app:empirical} gives additional implementation detail, a fuller scenario-by-scenario discussion, and the rest of the figure suite generated by the benchmark.

Table~\ref{tab:empirical} summarizes the main patterns. The numbers are not meant as a leaderboard. They are there to show \emph{which} part of the objective-first story drives the outcome in each case.

\begin{table}[t]
\centering
\small
\begin{tabular}{p{0.18\textwidth} p{0.47\textwidth} p{0.25\textwidth}}
\hline
\textbf{Scenario} & \textbf{Main lesson} & \textbf{Representative finding} \\
\hline
Direct-control equivalence & When the action--treatment gap disappears, REC and TRT collapse. & The best policy has value $0.775$ under both objectives; ThompsonBounded on REC and 2SLS-epsilon-decay on TRT both recover it. \\
Easy strong IV & Safety is visible mainly as regret when identification is easy. & At the default horizon, BRACE-REC regret is $167.25$, BRACE-REC-FAST is $65.35$, and Actual-UCB is $0.95$. \\
Private-signal operational advantage & REC can be the correct deployable target even when TRT is inferior by construction. & RECERT deploys REC value $1.0$, while its parallel TRT estimate remains $0.5$ and is not deployed. \\
Workflow redesign / future control & Present-channel deployment and future direct-control welfare can point in different directions. & RECERT deploys REC value $0.69$ while estimating TRT value $0.90$ and abstaining on structural deployment. \\
Weak IV, small gap & Under weak identification, the distinctive safe behavior is abstention rather than low regret. & BRACE variants abstain, while unsafe TRT baselines have wrong-non-abstain rates from about $0.10$ to $0.70$. \\
Homogeneity failure & REC remains robust, but TRT should be weakened or withheld. & RECERT keeps deployed REC value $0.95$; 2SLS-style TRT rules deploy the wrong structural policy with value $0.45$. \\
Objective mismatch & Even under valid IV assumptions, operational and structural optima need not agree. & In the tradeoff environment, RECERT deploys REC value $0.85$ while estimating TRT value $0.90$ and abstaining on structural deployment. \\
Rectangular and rescued IV & Extra instruments can materially tighten structural uncertainty. & Partial-ID INF width falls to about $0.18$, versus about $0.36$ in the overidentified case and about $1.07$ in the weak square-IV case. \\
\hline
\end{tabular}
\caption{Representative outcomes from the finite-context simulation benchmark. Figures are drawn from the executable study described in this section, using $10$ seeds per scenario and each scenario's default horizon ($2048$ or $4096$).}
\label{tab:empirical}
\end{table}

Figures~\ref{fig:empirical-core} and~\ref{fig:empirical-intervals} isolate the three empirical comparisons that carry the main argument; the appendix retains the full benchmark-wide plots. The left panel of Figure~\ref{fig:empirical-core} focuses only on the four objective-choice scenarios that matter most for the REC discussion. The right panel focuses only on the representative TRT cases needed to see when structural learners should deploy, abstain, or be regarded as unsafe. Figure~\ref{fig:empirical-intervals} then isolates the design comparison behind the uncertainty story, showing how the same structural target can look very different under strong square identification, weak square identification, and rectangular rescue.

The benchmark is not a pure abstention study. At the default horizons, the base BRACE-TRT rule correctly deploys the structural optimum in all runs of the easy strong-IV and rectangular-overidentified settings, in $9/10$ runs of the workflow-redesign setting, and the rectangular partial-identification variant BRACE-TRT-Partial deploys correctly in all runs of the rescued-IV setting. The more conservative behavior is concentrated in the FAST and RECERT certificate-style variants, and in the genuinely weak or misspecified environments. This matters for interpretation: the experiments do show that the implementation can recover and deploy the right TRT policy when the structure is strong enough; the interesting question is when it should refuse to do so.

\begin{figure}[t]
\centering
\begin{minipage}[t]{0.49\textwidth}
    \centering
    \includegraphics[width=\linewidth]{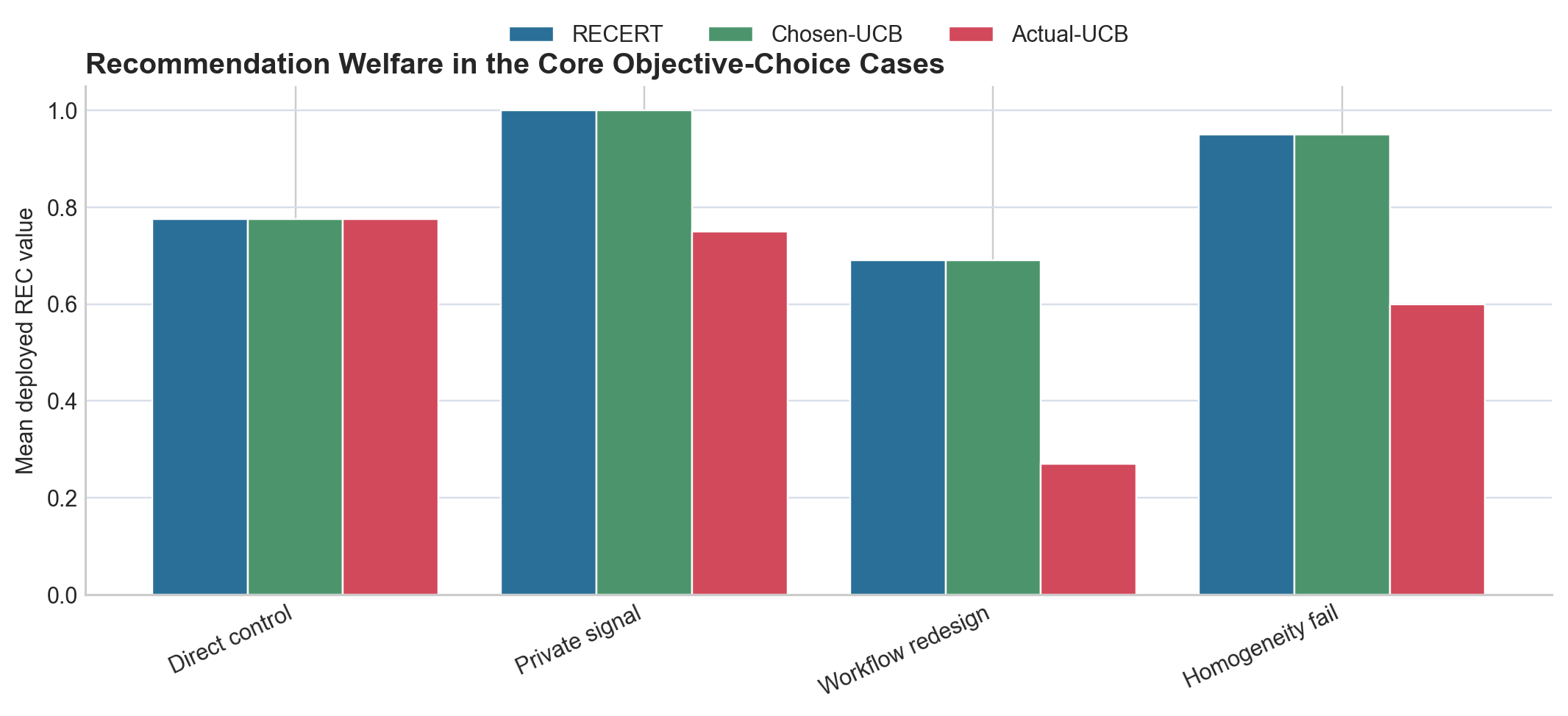}
\end{minipage}
\hfill
\begin{minipage}[t]{0.49\textwidth}
    \centering
    \includegraphics[width=\linewidth]{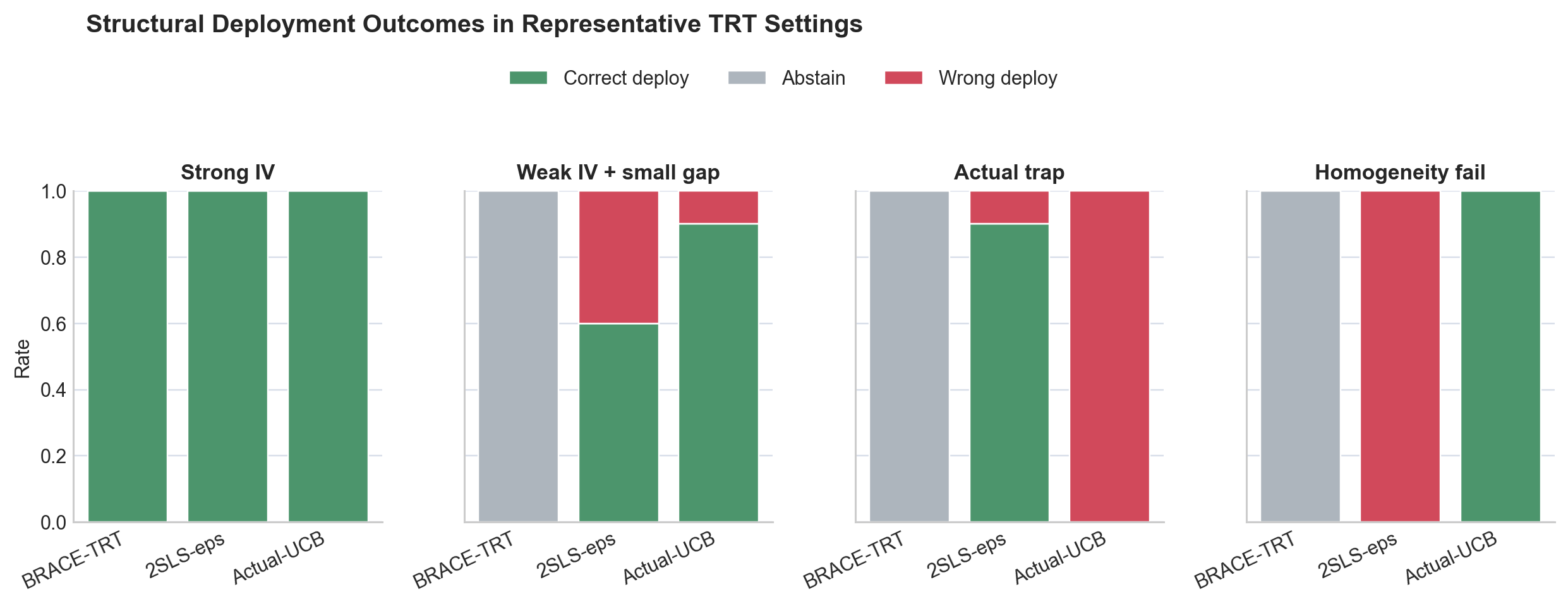}
\end{minipage}
\caption{Core empirical comparisons. Left: in the direct-control, private-signal, workflow-redesign, and homogeneity-failure cases, RECERT matches a strong recommendation baseline while Actual-UCB breaks once recommendation and treatment incentives diverge. Right: in representative TRT settings, the relevant comparison is not just error rate but the split between correct deployment, abstention, and wrong deployment. BRACE abstains in the hard cases, unsafe baselines can fail in different ways, and strong-IV settings still allow correct structural deployment.}
\label{fig:empirical-core}
\end{figure}

\begin{figure}[t]
\centering
\includegraphics[width=0.72\textwidth]{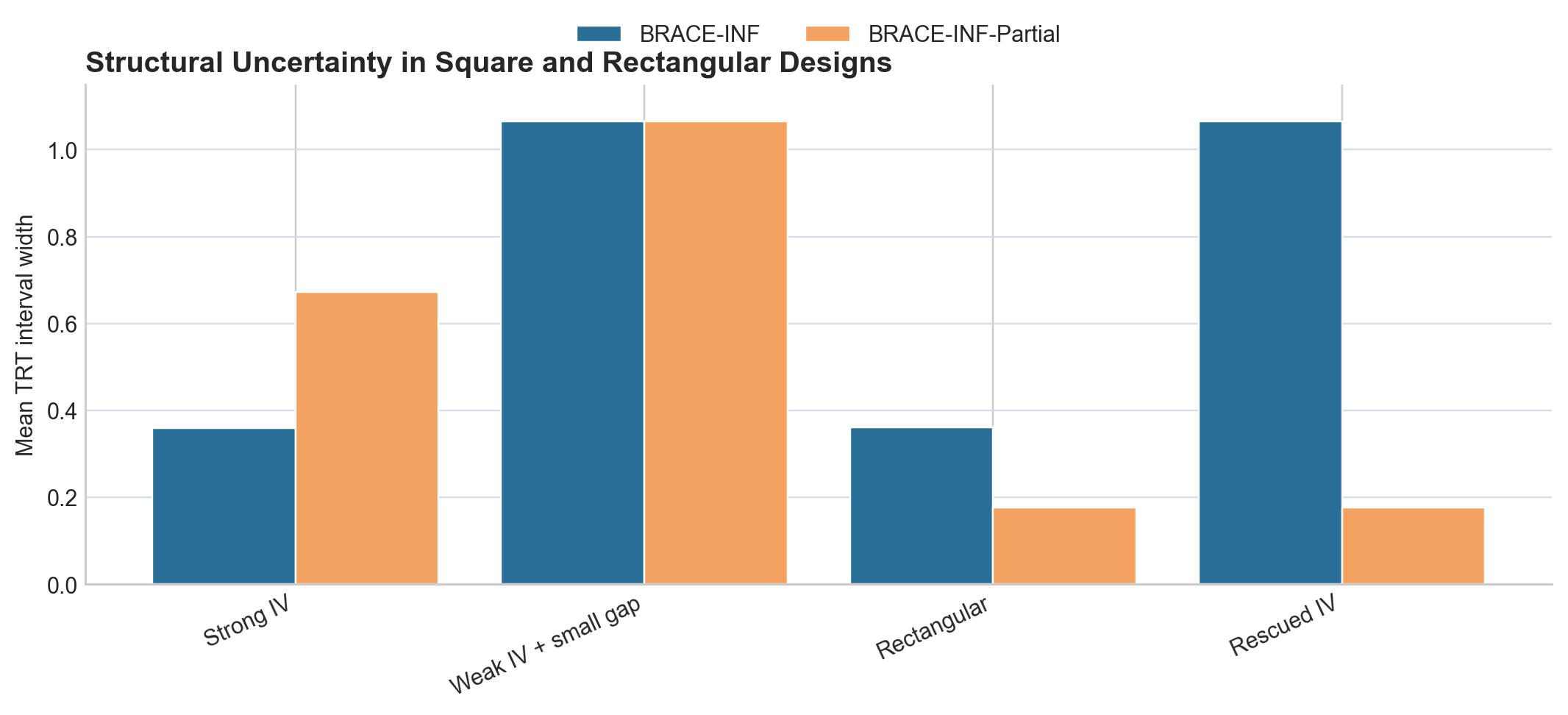}
\caption{Selective structural-uncertainty comparison. Weak square-IV settings produce wide honest intervals, while rectangular overidentification and the rescued-IV design materially tighten them. The appendix shows the full inference panel; this figure isolates the design comparison that matters most for the paper's main claim about certification and extra instruments.}
\label{fig:empirical-intervals}
\end{figure}

Three qualitative conclusions matter for the framing of the paper.

First, the empirical study clarifies both why historical treatment-first language once made sense and why it breaks in mediated settings. In the direct-control equivalence environment, REC and TRT literally coincide. But in the private-signal, tradeoff, workflow-redesign, and homogeneity-violation environments, the two targets separate for different reasons: private information can make REC operationally superior, valid IV structure can still produce different present and future optima, redesign can make future TRT strictly better than current REC, and misspecification can leave REC as the only coherent deployable target.

That said, the benchmark should not be read as an argument that REC-first is the ethically or scientifically correct default for every trial, nor as an argument that TRT should remain the automatic default merely by historical convention. The simulations instead show why the choice must be explicit. When the existing recommendation channel is the real deployment target and participants want the trial to improve outcomes under that channel, REC can be the more decision-relevant object for present use. When the goal is a transportable treatment rule, channel redesign, or a more transparent basis for care than the status quo workflow, TRT or INF may still be the right primary target even when REC looks operationally stronger in the short run.

Second, weak-ID safety should not be judged by regret alone. In easy environments, unsafe baselines can dominate on sample efficiency. The distinctive contribution of BRACE is that, under weak identification, it can widen uncertainty and abstain rather than convert unstable inversion into a treatment claim. The simulations make this concrete: abstention is not a pathology of the benchmark but the behavior the theory is designed to protect.

Third, the extensions beyond the square homogeneous theory help in different ways. Rectangular and partial-identification extensions do not just tighten intervals abstractly; in the rescued-IV setting they can enable correct structural deployment where square point-ID BRACE still abstains. But they do not erase the conceptual boundary between REC and TRT. In particular, when homogeneity fails, the right lesson is not that one should force a sharper TRT estimate; it is that REC may be the only target that remains both well defined and operationally meaningful.

\paragraph{Applied reading of non-certification.}
When BRACE returns only full-range TRT intervals or abstains on structural deployment, that is not merely a negative result; it is a diagnostic. The experiments suggest three different responses. First, if the existing recommendation channel is the real deployment target, then REC or RECERT may already be the right applied answer; the private-signal and homogeneity-failure environments show that one can still learn a useful recommendation policy when a point TRT claim is unavailable or conceptually misaligned. Second, if TRT remains the target, the next question is whether the obstacle is informational rather than conceptual. The rescued-IV experiment shows that design changes can matter: adding a third recommendation arm shrinks structural uncertainty dramatically and can turn abstention into correct deployment. Third, if domain knowledge makes homogeneity doubtful, then a failure to obtain a stable TRT answer should push the analysis toward partial identification, sensitivity analysis, or an explicitly weaker estimand rather than toward more aggressive inversion. In applied use, non-certification should therefore be read as guidance about what to change: the objective, the design, or the strength of the claim.

\section{Rich Contexts: A Semiparametric Roadmap}
\label{sec:rich}

The finite-context analysis avoids nuisance estimation by uniform exploration and empirical averages. With continuous contexts $\W\subseteq\R^d$, structural learning requires nuisance estimation.

Let $\Hc_{t-1}$ be the past history up to time $t-1$. Suppose the exploration propensity $q_t(z\mid W_t)$ is $\Hc_{t-1}$-measurable and satisfies a positivity condition
\[
q_t(z\mid w)\ge q_{\min}>0
\qquad
\forall z,w,t.
\]
Let $\hat P$ and $\hat\mu$ be nuisance estimators trained only on past data (for example by sample splitting or cross-fitting). For a treatment policy $\pi$, define the candidate score
\begin{equation}\label{eq:drscore}
\Gamma_t^\pi(\hat P,\hat\mu)
=
\hat\mu_{\pi(W_t)}(W_t)
+
e_{\pi(W_t)}^\top
\hat P(W_t)^{-1}
\frac{e_{Z_t}}{q_t(Z_t\mid W_t)}
\Bigl(Y_t-\hat\mu_{X_t}(W_t)\Bigr),
\end{equation}
where $e_a$ denotes the $a$th standard basis vector of $\R^K$.

The residual is indexed by the \emph{realized treatment} $X_t$, not the instrument $Z_t$. This is the natural analogue of the IV orthogonal score.

\begin{lemma}[Conditional bias factorization]\label{lem:bias}
Assume $P_0(W_t)$ and $\hat P(W_t)$ are invertible and that $\hat P,\hat\mu$ are measurable with respect to $\Hc_{t-1}$. Then
\[
\E_0\!\left[\Gamma_t^\pi(\hat P,\hat\mu)-\mu_{0,\pi(W_t)}(W_t)\mid W_t,\Hc_{t-1}\right]
=
e_{\pi(W_t)}^\top
\hat P(W_t)^{-1}
\bigl(\hat P(W_t)-P_0(W_t)\bigr)
\bigl(\hat\mu(W_t)-\mu_0(W_t)\bigr).
\]
\end{lemma}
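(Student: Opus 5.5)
The plan is to compute the conditional expectation of the correction term in \eqref{eq:drscore} directly, averaging over the randomized recommendation $Z_t$. Then we rewrite the result using the IV moment identity of Lemma~\ref{lem:id} and collect terms into the stated product form. Throughout we condition on $(W_t,\Hc_{t-1})$. On this $\sigma$-field $\hat P(W_t)$, $\hat\mu(W_t)$, $\pi(W_t)$ and $q_t(\cdot\mid W_t)$ are all fixed, so they factor out of every conditional expectation. The only randomness left is $Z_t$ and the current unit's latent variables $(C_t,Y_t(\cdot))$.

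\textbf{Step 1 (averaging over $Z_t$).} By the design assumption, $Z_t$ is drawn from $q_t(\cdot\mid W_t)$ independently of $(C_t,Y_t(\cdot))$ given $(W_t,\Hc_{t-1})$. The i.i.d.\ sampling makes the current unit's latent variables independent of $\Hc_{t-1}$ given $W_t$. Hence
\[
\E_0\!\left[\frac{e_{Z_t}}{q_t(Z_t\mid W_t)}\bigl(Y_t-\hat\mu_{X_t}(W_t)\bigr)\,\Big|\,W_t,\Hc_{t-1}\right]
=\sum_{z=1}^K e_z\,\E_0\!\bigl[Y_t(C_t(z))-\hat\mu_{C_t(z)}(W_t)\mid W_t\bigr].
\]
The inverse-propensity weight cancels exactly against $q_t(z\mid W_t)$; positivity guarantees this is well defined.

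\textbf{Step 2 (identifying the $z$-th coordinate).} The $z$-th coordinate equals
\[
g_{0,z}(W_t)-\sum_x P_0(W_t)_{zx}\hat\mu_x(W_t),
\]
so the vector is $g_0(W_t)-P_0(W_t)\hat\mu(W_t)$. Here I invoke Lemma~\ref{lem:id}, i.e.\ the contextual homogeneity of Assumption~\ref{asm:hom} applied pointwise in $w$, to write $g_0=P_0\mu_0$. The vector then becomes $P_0(W_t)\bigl(\mu_0(W_t)-\hat\mu(W_t)\bigr)$. This is the one place a substantive assumption enters. The lemma statement lists only invertibility, so in the write-up I would make explicit that the moment identity $g_0=P_0\mu_0$ (homogeneity) is being used, since without it the bias picks up an extra term $\hat P^{-1}(g_0-P_0\mu_0)$.

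\textbf{Step 3 (algebra).} Writing $P_0$, $\hat P$, $\mu_0$, $\hat\mu$ for the values at $W_t$ and $e=e_{\pi(W_t)}$, the conditional bias is
\[
e^\top(\hat\mu-\mu_0)+e^\top\hat P^{-1}P_0(\mu_0-\hat\mu).
\]
Writing $e^\top(\hat\mu-\mu_0)=e^\top\hat P^{-1}\hat P(\hat\mu-\mu_0)$ combines the two terms into
\[
e^\top\hat P^{-1}(\hat P-P_0)(\hat\mu-\mu_0),
\]
which is the claim.

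The main obstacle is not the algebra but making Step 1 rigorous under adaptivity. One must check that conditioning on $\Hc_{t-1}$ does not disturb the distribution of the fresh unit. One must also check that $\hat P$, $\hat\mu$ and $q_t$ are genuinely predictable, so they pass through the expectation. Both facts follow from the i.i.d.\ stream together with the measurability hypotheses, and I would state them explicitly before the computation.
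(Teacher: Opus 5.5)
Your proposal is correct and follows the paper's proof essentially line for line: you average the inverse-propensity-weighted residual over $Z_t$ to get $g_0(W_t)-P_0(W_t)\hat\mu(W_t)$, substitute $g_0=P_0\mu_0$ from Lemma~\ref{lem:id}, and rewrite $e^\top(I-\hat P^{-1}P_0)(\hat\mu-\mu_0)$ in the product form. Your remark is also accurate: this substitution relies on the homogeneity of Assumption~\ref{asm:hom}, which the lemma's hypotheses do not list, while the assumed invertibility of $P_0$ is never used. That matches how the paper's proof invokes Lemma~\ref{lem:id} ``at the population level.''
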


\begin{proof}
Condition on $(W_t,\Hc_{t-1})$. Because $q_t$ is the conditional assignment probability,
\[
\E_0\!\left[\frac{e_{Z_t}}{q_t(Z_t\mid W_t)}\Bigl(Y_t-\hat\mu_{X_t}(W_t)\Bigr)\,\middle|\, W_t,\Hc_{t-1}\right]
=
g_0(W_t)-P_0(W_t)\hat\mu(W_t).
\]
By Lemma~\ref{lem:id} at the population level, $g_0(W_t)=P_0(W_t)\mu_0(W_t)$. Therefore
\[
\E_0[\Gamma_t^\pi(\hat P,\hat\mu)\mid W_t,\Hc_{t-1}]
=
\hat\mu_{\pi(W_t)}(W_t)
+
e_{\pi(W_t)}^\top\hat P(W_t)^{-1}P_0(W_t)\bigl(\mu_0(W_t)-\hat\mu(W_t)\bigr).
\]
Subtracting $\mu_{0,\pi(W_t)}(W_t)$ and rearranging gives
\[
e_{\pi(W_t)}^\top
\Bigl(I-\hat P(W_t)^{-1}P_0(W_t)\Bigr)\bigl(\hat\mu(W_t)-\mu_0(W_t)\bigr),
\]
which equals the stated product-form remainder.
\end{proof}

\begin{remark}[Second-order bias is not the whole story]
The product-form remainder in Lemma~\ref{lem:bias} is encouraging, but it is premultiplied by $\hat P(W_t)^{-1}$. Under weak identification, the inverse can be unstable and amplify nuisance error. Any successful rich-context theory will therefore need a stabilization device analogous to finite-context certification: clipping, regularized inversion, or an explicit fallback region where structural claims are intentionally weakened.
\end{remark}

\begin{conjecture}[Anytime-valid policy inference]
Suppose nuisance estimators are cross-fitted, the product remainder in Lemma~\ref{lem:bias} is $o_p(t^{-1/2})$ uniformly over $\pi\in\Pi$, and the inverse map is stabilized so that predictable line-crossing inequalities remain applicable. Then one should be able to construct confidence sequences for $V^{\mathrm{str}}(\pi)$ that are simultaneously valid over $\pi\in\Pi$ and over all stopping times.
\end{conjecture}

\section{Conclusion}

Contextual bandits with noncompliance should be framed by objective, not by feedback alone. Recommendation welfare, structural treatment learning, and inference are different targets, and they can pull the design in different directions. In finite contexts, BRACE shows that one can obtain parameter-free, weak-ID-safe structural identification while also retaining a clean operational solution when REC is the goal. The empirical study reinforces the same point from the other direction: on easy problems safety appears as regret, under weak identification as abstention and interval width, under homogeneity failure as a reason to prefer REC over TRT, and under extra instruments as tighter but still objective-specific structural information. In rich contexts, the orthogonal score developed here highlights both the opportunity---product-form bias---and the obstacle---stabilizing inverse compliance maps under adaptive sampling.

\appendix

\section{Empirical Appendix}
\label{app:empirical}

This appendix records the broader empirical benchmark behind Section~\ref{sec:empirical}. The goal is not to turn the paper into a benchmark competition, but to make transparent how the objective-first framing behaves across environments that isolate different difficulties.

\subsection{Benchmark Design}

The executable study uses finite-context environments with exact enumeration of recommendation and treatment policies. This makes every reported REC and TRT value exact for the simulated data-generating process, so the comparison is about online learning behavior rather than offline approximation error. The benchmark artifacts use $10$ seeds per scenario and each scenario's default horizon ($2048$ or $4096$). In the main text we display curated slices of those artifacts to keep the core comparisons readable; this appendix shows the broader benchmark views.

The algorithm families are grouped by the target they implicitly privilege:
\begin{enumerate}[leftmargin=2em]
    \item \textbf{REC-first methods:} BRACE-REC, BRACE-REC-FAST, Chosen-UCB, ThompsonBounded, and RECERT's recommendation component.
    \item \textbf{TRT-oriented methods:} BRACE-TRT, BRACE-TRT-FAST, BRACE-TRT-Partial, the 2SLS family, and Actual-UCB when it effectively updates toward treatment labels.
    \item \textbf{Inference methods:} BRACE-INF, BRACE-INF-Partial, and a naive structural interval baseline.
\end{enumerate}

RECERT is especially useful for the interpretation of the paper. It always produces a recommendation policy for current deployment, but treats the structural treatment side as a parallel certification problem. This matches the paper's intended conceptual split: REC is a deployment problem under the current channel, while TRT is a structural claim about a future direct-assignment regime.
Accordingly, the right deployment sanity checks for TRT are the BRACE-TRT and BRACE-TRT-Partial tracks, not RECERT. RECERT is intentionally stricter on the treatment side because its purpose is to keep structural claims diagnostic unless certification is overwhelming.

\subsection{Scenario Catalog}

The benchmark scenarios are purpose-built to separate distinct failure modes or advantages.
\begin{enumerate}[leftmargin=2em]
    \item \textbf{Direct-control equivalence:} recommendations equal treatments, so REC and TRT collapse exactly.
    \item \textbf{Strong IV easy:} a benign one-context setting where all methods eventually learn the right answer and differences are mostly about efficiency.
    \item \textbf{Private-signal operational advantage:} the downstream actor uses private information unavailable to the learner, so the best recommendation policy is strictly better than every direct treatment rule measurable by the learner.
    \item \textbf{Weak IV abstention:} the first stage is nearly singular but the structural gap is large, isolating wide intervals and deliberate non-certification.
    \item \textbf{Weak IV, small structural gap:} the first stage is nearly singular and the structural gap is tiny, so unstable treatment learners can be confidently wrong.
    \item \textbf{Homogeneity violation:} the IV identity fails because treatment contrasts differ by compliance type, isolating the precise assumption behind TRT point identification.
    \item \textbf{Operational-structural tradeoff:} the IV model is valid but the REC-optimal recommendation and TRT-optimal treatment differ, showing that even correct structure does not collapse the two objectives.
    \item \textbf{Workflow redesign / future control:} the current recommendation channel is a bottleneck, so present-patient REC and future-regime TRT point in different directions.
    \item \textbf{Harmful actual-treatment trap:} updating on realized treatment induces a recommendation learner to prefer the wrong action.
    \item \textbf{Rare context:} one low-probability context makes simultaneous safe guarantees operationally expensive.
    \item \textbf{Rectangular overidentification:} three recommendation labels identify two treatments, showing how extra instruments can sharpen structural information.
    \item \textbf{Weak IV rescued by extra instrument:} the square projection is weak, but an additional recommendation arm materially improves structural uncertainty and deployment.
\end{enumerate}

\subsection{Scenario-by-Scenario Findings}

\paragraph{Direct-control equivalence.}
This is the empirical anchor for the historical treatment-first regime. Every policy has exactly the same REC and TRT value. The best policy has value $0.775$ under both objectives, and simple baselines recover it quickly: ThompsonBounded has REC regret $12.34$ and 2SLS-epsilon-decay has TRT regret $8.51$. BRACE-INF still gives coverage $1.0$ with interval width $0.7235$, but BRACE's deployment side remains conservative at this horizon even though the target is structurally easy.

\paragraph{Easy strong IV.}
All methods estimate the correct policy, but the constants reveal what BRACE is paying for. At the scenario's default horizon, BRACE-REC has mean operational regret $167.25$, BRACE-REC-FAST improves this to $65.35$, and Actual-UCB is at $0.95$. On the TRT side, BRACE-TRT deploys correctly but is much slower than the 2SLS family, while BRACE-TRT-FAST remains conservative and abstains. RECERT behaves like a REC-first method here: it keeps deployed REC value $0.75$, estimates the right TRT value, and still withholds structural deployment on this horizon.

\paragraph{Private-signal operational advantage.}
This is the cleanest empirical manifestation of Proposition~\ref{prop:strict}. REC methods achieve value $1.0$, while TRT-oriented methods top out at $0.5$. RECERT mirrors the intended paper message exactly: it deploys REC value $1.0$ and reports a parallel TRT estimate of $0.5$ without structurally deploying it.

\paragraph{Weak IV regimes.}
The large-gap weak-IV environment isolates conservative uncertainty. BRACE-INF maintains coverage $1.0$ with interval width $1.092$ and certified share $0.0$, while unsafe baselines often look deceptively good on regret because the signal eventually points the right way. The small-gap weak-IV environment isolates incorrect structural deployment. BRACE-style TRT methods abstain, whereas unsafe structural learners act and incur nontrivial wrong-action rates: about $0.10$ for Actual-UCB, $0.20$ for adaptive 2SLS, $0.40$ for epsilon-decay 2SLS, $0.60$ for Chosen-UCB, and $0.70$ for fixed-schedule 2SLS. RECERT continues to deploy its recommendation side while refusing a treatment deployment.

\paragraph{Homogeneity violation.}
This is the key empirical meeting point of the conceptual and technical sides of the paper. REC remains strong and deployable: RECERT keeps deployed REC value $0.95$. But TRT should not be trusted as a point-deployment target. Unsafe structural baselines deploy the wrong structural policy with value $0.45$, while the REC-first certificate view abstains. The lesson is not just that misspecification hurts; it is that REC is the target that remains coherent when the homogeneity bridge to TRT breaks.

\paragraph{Objective mismatch under valid IV assumptions.}
In the tradeoff environment, the IV model is valid and yet REC and TRT still disagree. RECERT deploys REC value $0.85$ while estimating TRT value $0.9$. This is important because it shows that the REC/TRT split is not merely a misspecification story; it persists even in the paper's idealized structural regime.

\paragraph{Workflow redesign / future control.}
This is the cleanest empirical case for stakeholders who care about future channel reform. The best current-channel recommendation has REC value $0.69$, but the best direct-treatment rule has TRT value $0.90$. RECERT therefore deploys the current best recommendation while estimating the stronger structural target in parallel. Unlike the private-signal case, TRT is not conceptually misaligned here; it is the right target for future patients if the workflow is redesigned. BRACE-INF covers the structural target with width $0.4595$, and the base BRACE-TRT rule deploys correctly in $90\%$ of runs.

\paragraph{Harmful actual-treatment trap.}
This environment sharpens the warning against naively treating realized treatment as the learning target. Actual-UCB and Comply-UCB deploy REC value $0.66$ instead of the optimum $0.81$, while ThompsonBounded, Chosen-UCB, and BRACE-REC-FAST recover the right recommendation. On the TRT side, adaptive 2SLS and Chosen-UCB recover the correct structural target, but epsilon-decay and fixed-schedule 2SLS misfire in $10\%$ of runs.

\paragraph{Rectangular and rescued-IV settings.}
The rectangular environments show why the square specialization should be read as a theorem setting rather than a universal modeling principle. In the overidentified case, BRACE-INF has mean interval width $0.3605$, while BRACE-INF-Partial shrinks this to $0.1768$. In the rescued-IV case, the square weak-ID picture remains at width $1.0656$, but rectangular partial-identification again tightens uncertainty to $0.1769$ and enables BRACE-TRT-Partial to deploy the correct structural policy where square point-ID BRACE still abstains. These are genuine information gains from the design, not just cosmetic extensions of the notation.

\paragraph{Rare context.}
All methods recover the right final policy, but the exploration cost of simultaneous safety becomes obvious. BRACE-REC-FAST and BRACE-TRT-FAST still abstain at the default horizon and incur regret $272.6$, while simpler baselines such as Actual-UCB and 2SLS-epsilon-decay solve the problem with regret below $10$. This is the cleanest benchmark reminder that safe contextwise guarantees can be operationally expensive even when the final answer is simple.

\subsection{Additional Figures}

\begin{figure}[p]
\centering
\begin{minipage}[t]{0.49\textwidth}
    \centering
    \includegraphics[width=\linewidth]{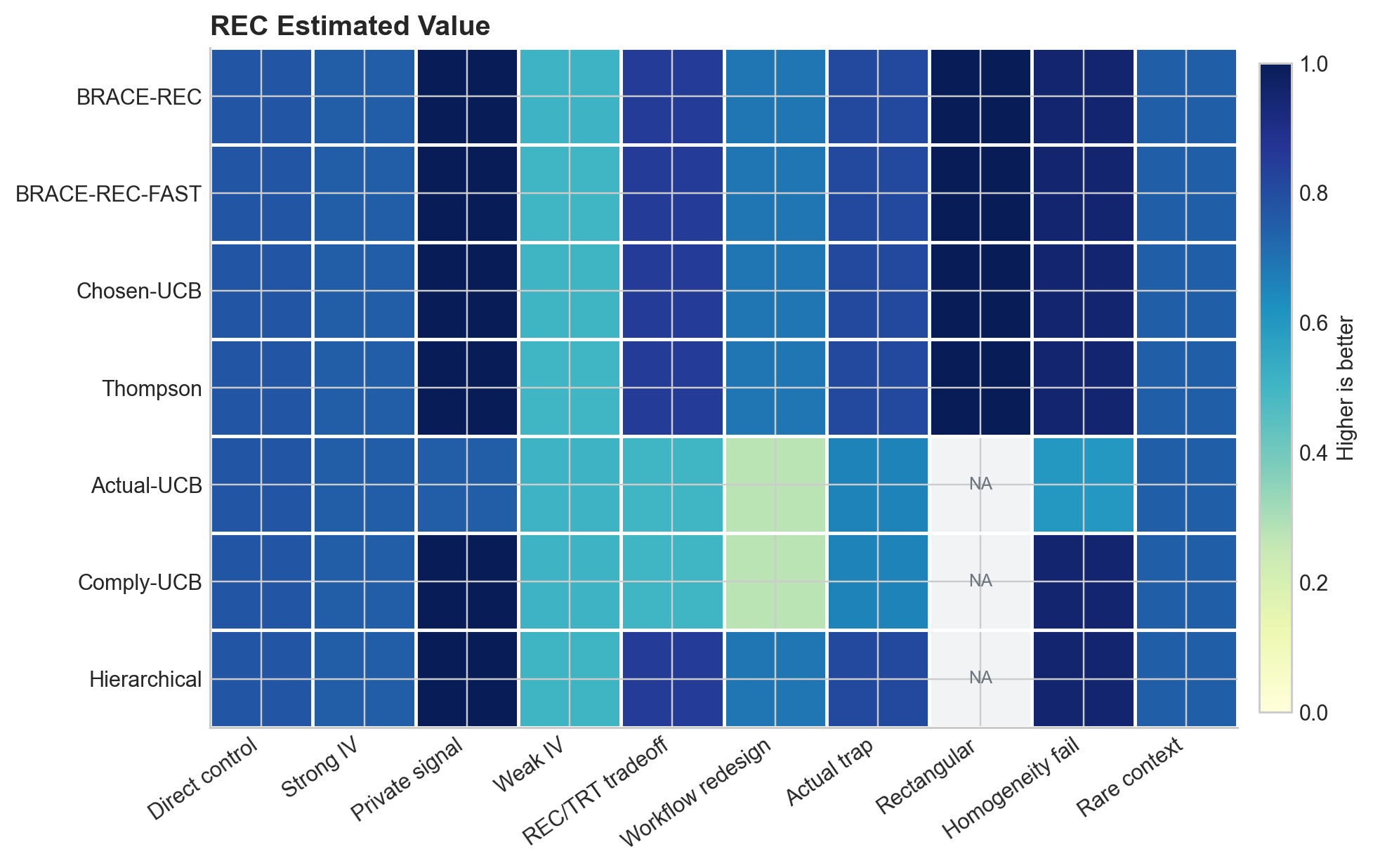}
\end{minipage}
\hfill
\begin{minipage}[t]{0.49\textwidth}
    \centering
    \includegraphics[width=\linewidth]{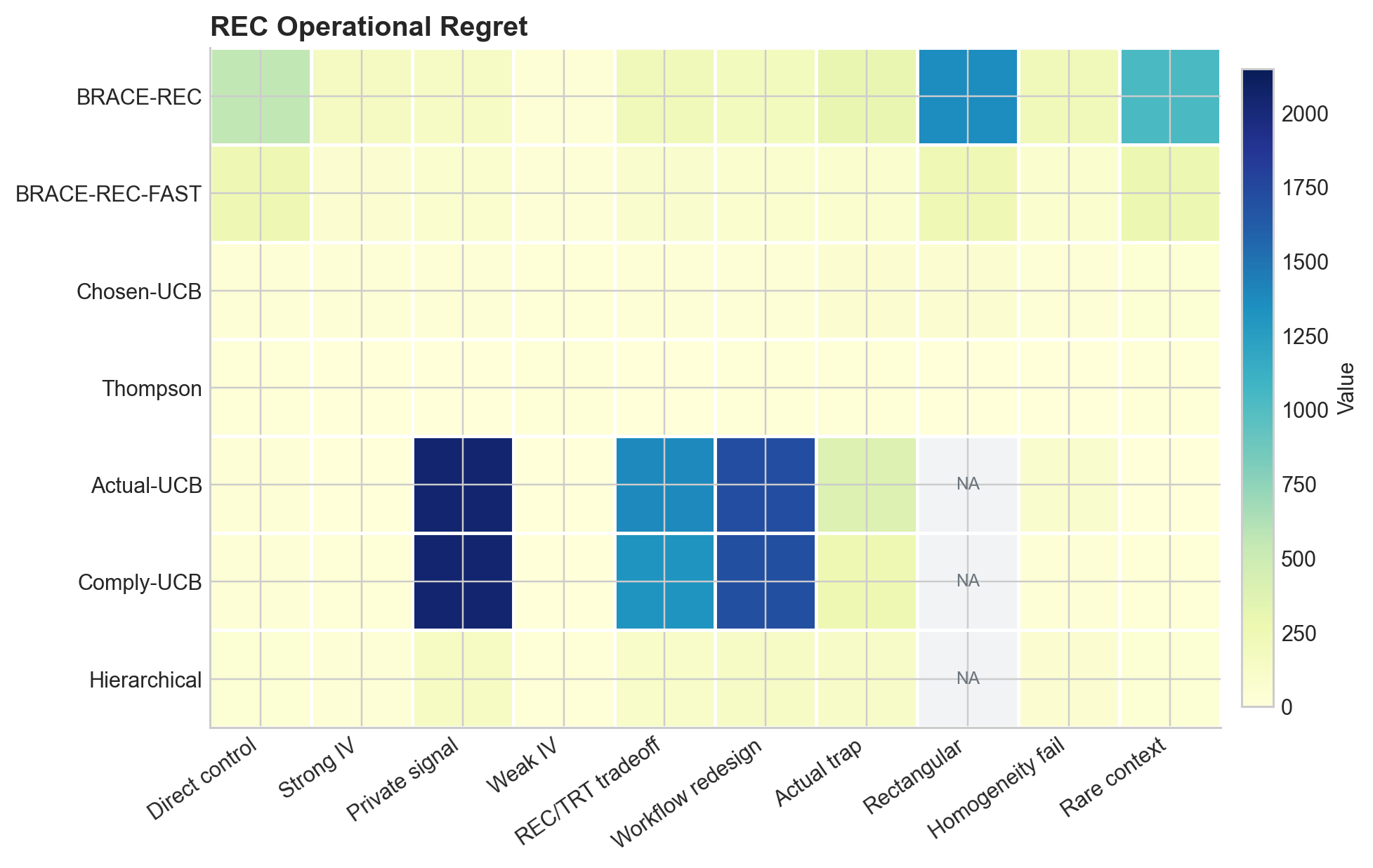}
\end{minipage}
\caption{Recommendation-track summary plots. Left: mean estimated REC value. Right: mean operational regret. Together they show that many methods agree on the final REC answer in easy environments, while differing sharply in how much operational welfare they spend during learning.}
\end{figure}

\begin{figure}[p]
\centering
\begin{minipage}[t]{0.49\textwidth}
    \centering
    \includegraphics[width=\linewidth]{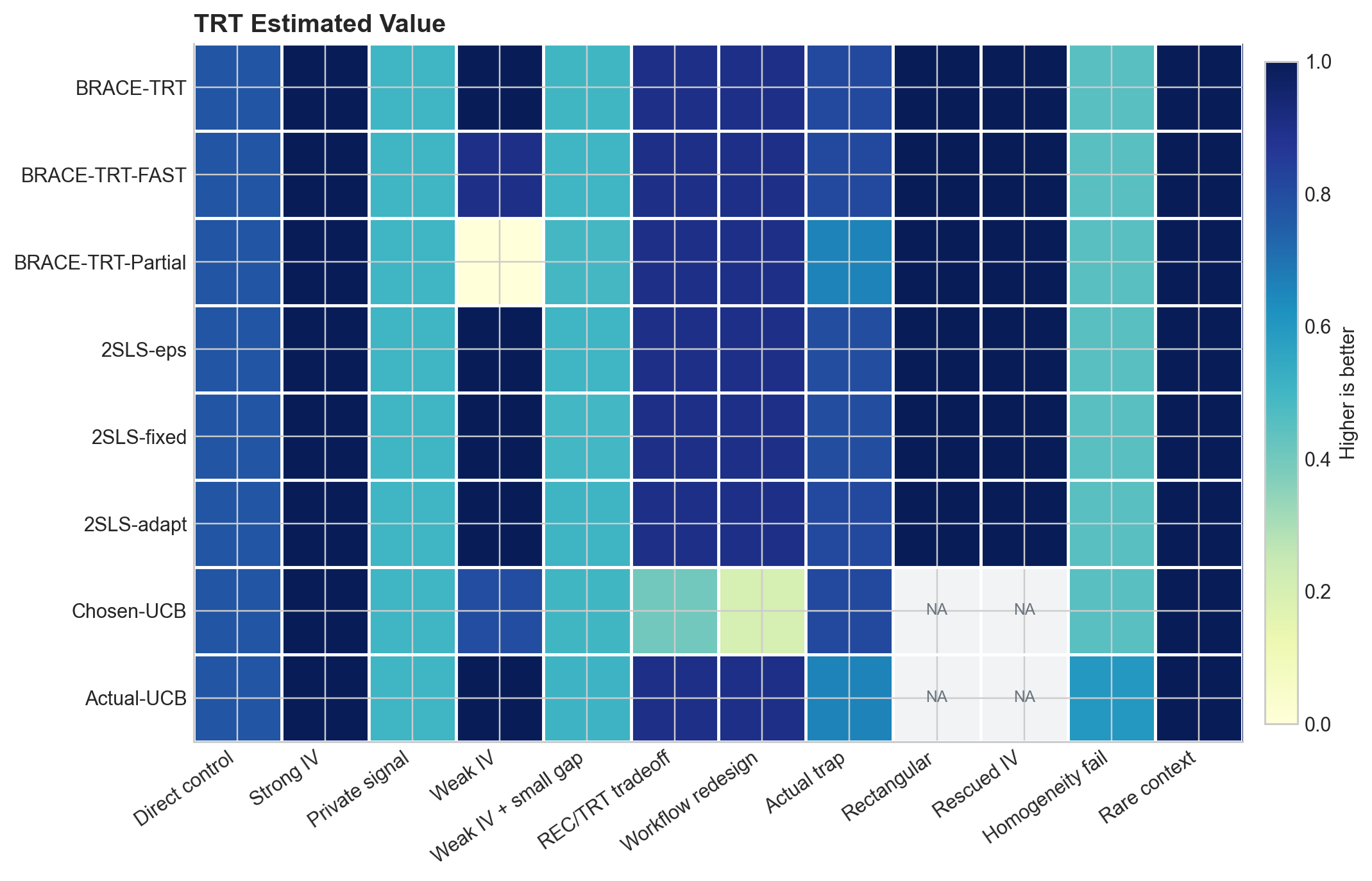}
\end{minipage}
\hfill
\begin{minipage}[t]{0.49\textwidth}
    \centering
    \includegraphics[width=\linewidth]{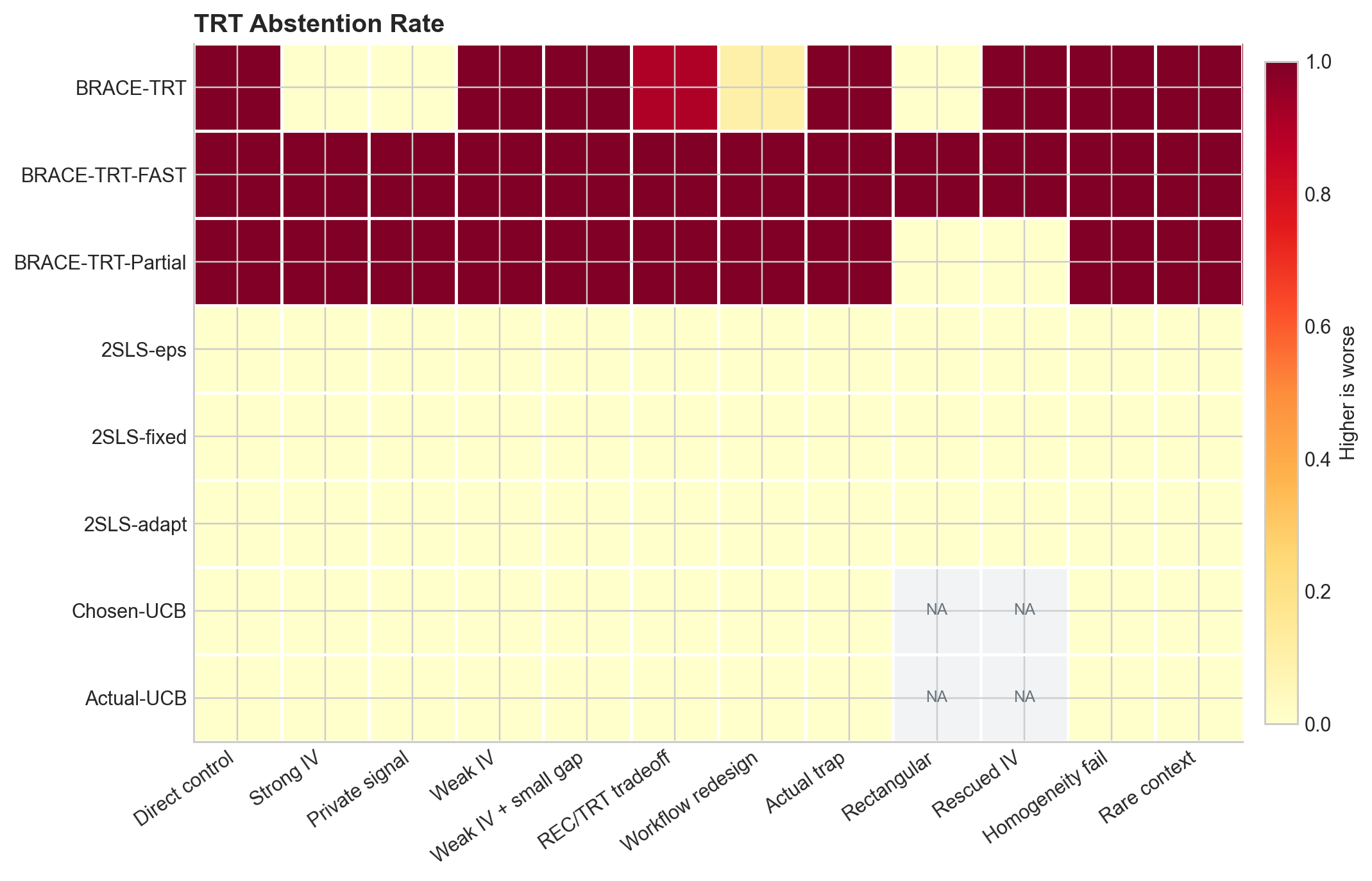}
\end{minipage}
\caption{Treatment-track summary plots. Left: mean estimated TRT value. Right: abstention rate. The combination makes the main BRACE tradeoff visible: structural caution can preserve validity but often appears as delayed or missing deployment.}
\end{figure}

\begin{figure}[p]
\centering
\begin{minipage}[t]{0.49\textwidth}
    \centering
    \includegraphics[width=\linewidth]{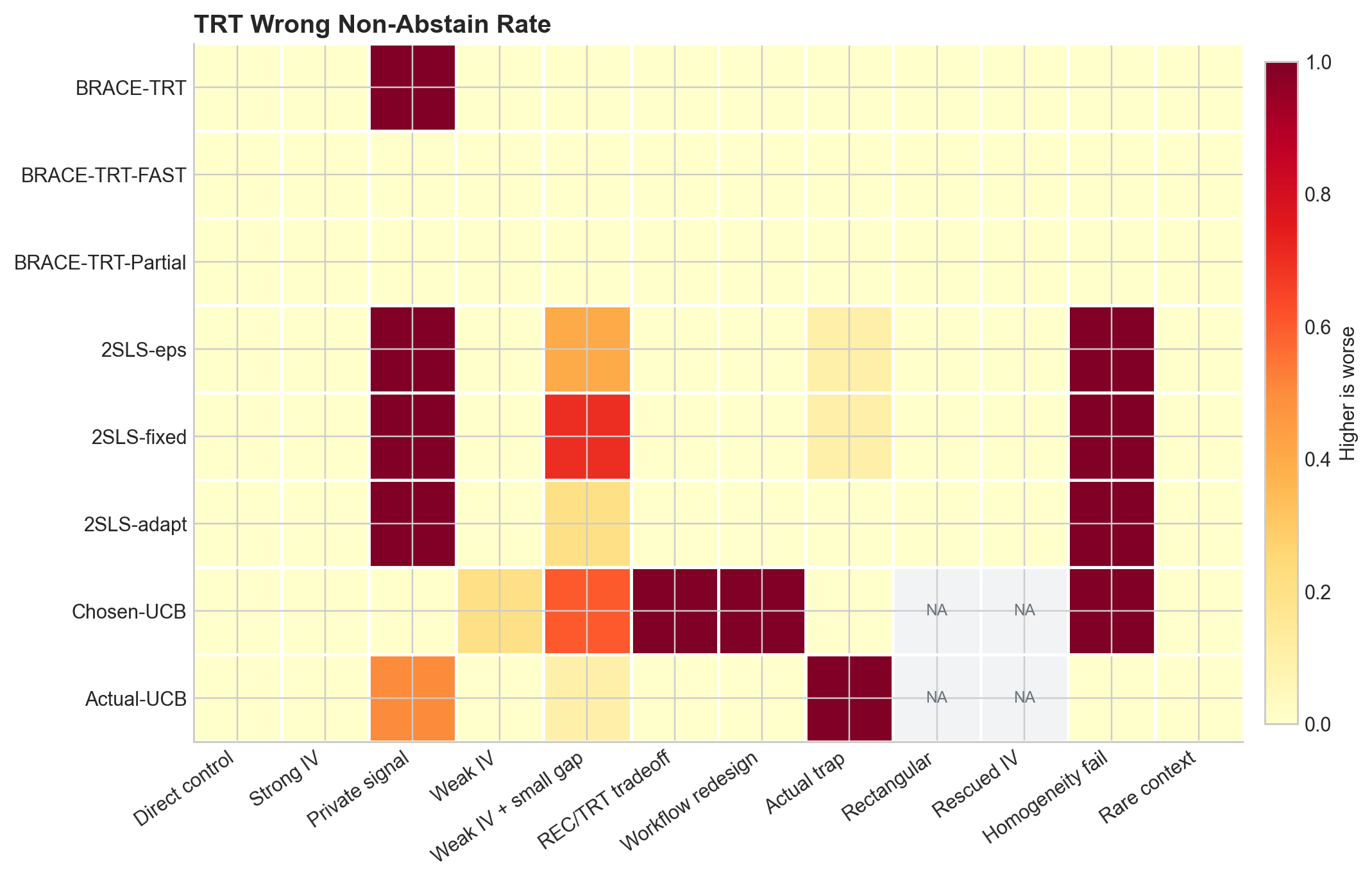}
\end{minipage}
\hfill
\begin{minipage}[t]{0.49\textwidth}
    \centering
    \includegraphics[width=\linewidth]{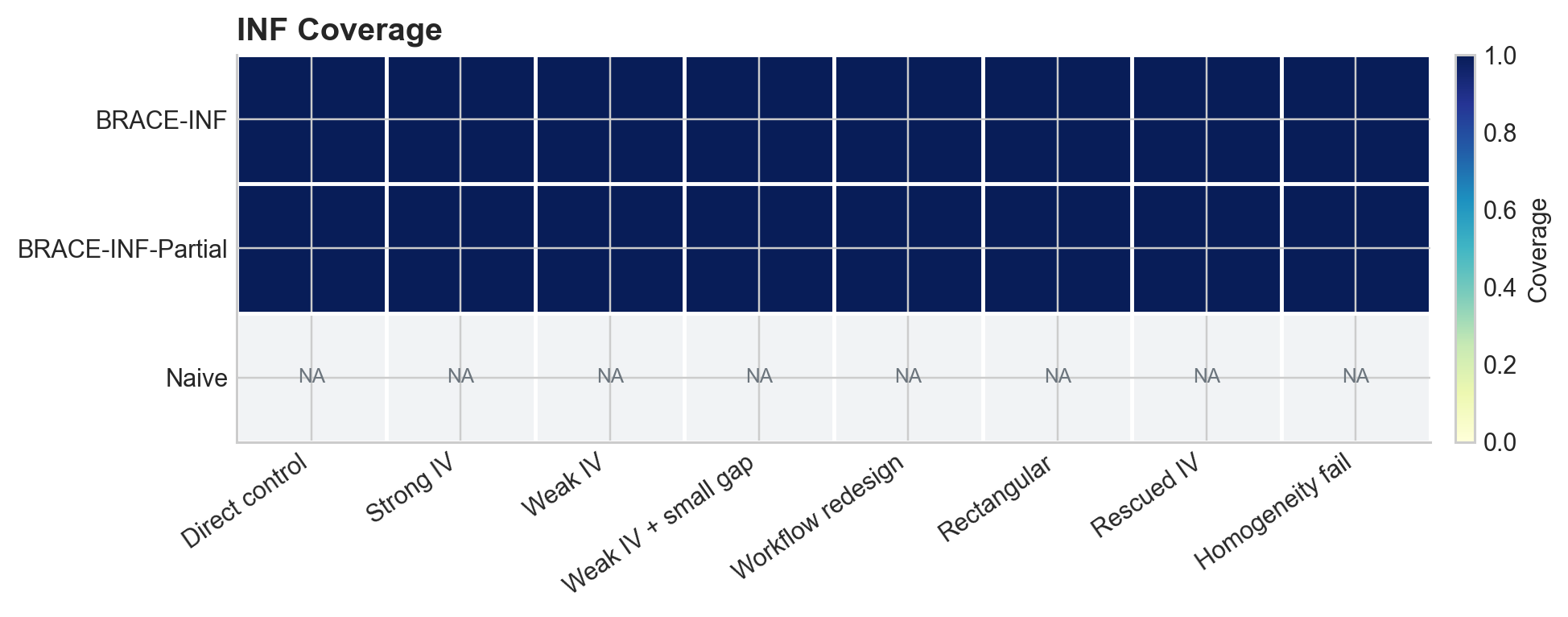}
\end{minipage}
\caption{Structural risk and uncertainty validity. Left: wrong-non-abstain rate for TRT algorithms. Right: coverage of the best-policy TRT value in the inference track. The left panel highlights where unsafe methods make incorrect structural decisions; the right panel shows that the interval procedures remain conservative in the benchmark.}
\end{figure}

\begin{figure}[p]
\centering
\begin{minipage}[t]{0.49\textwidth}
    \centering
    \includegraphics[width=\linewidth]{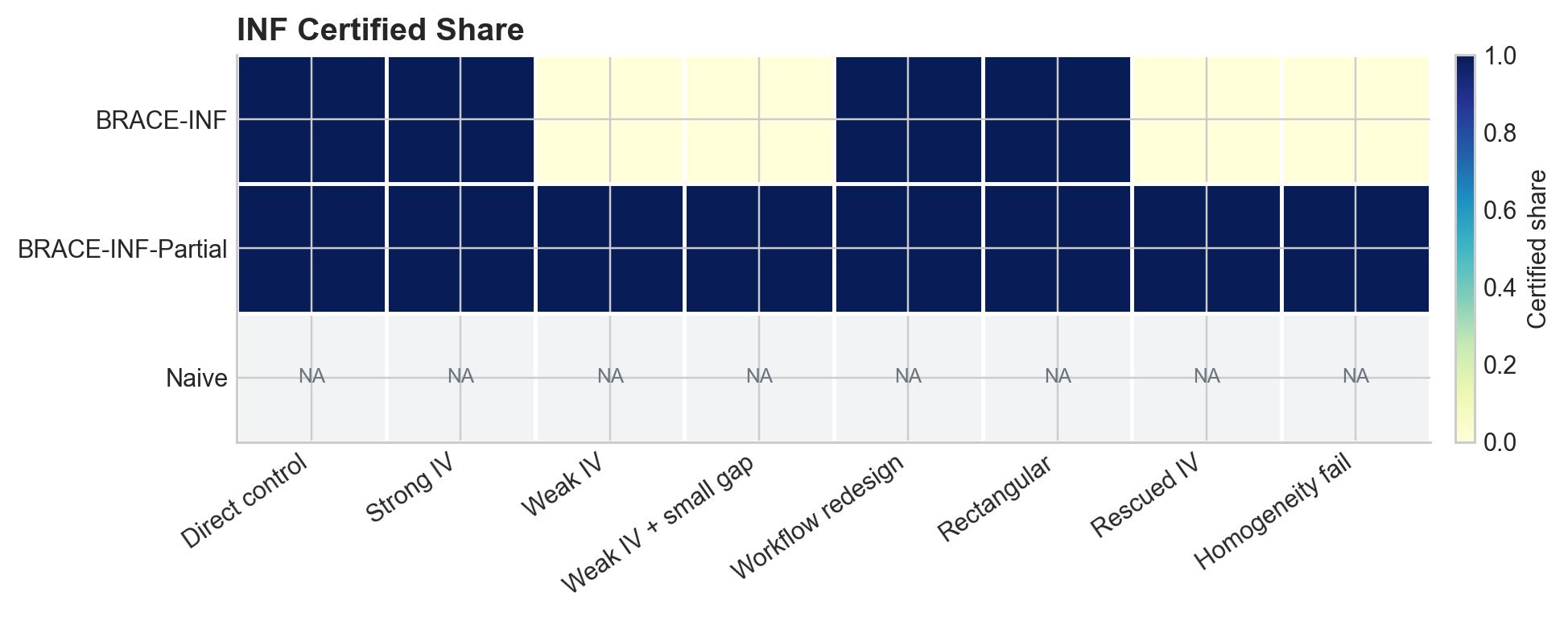}
\end{minipage}
\hfill
\begin{minipage}[t]{0.49\textwidth}
    \centering
    \includegraphics[width=\linewidth]{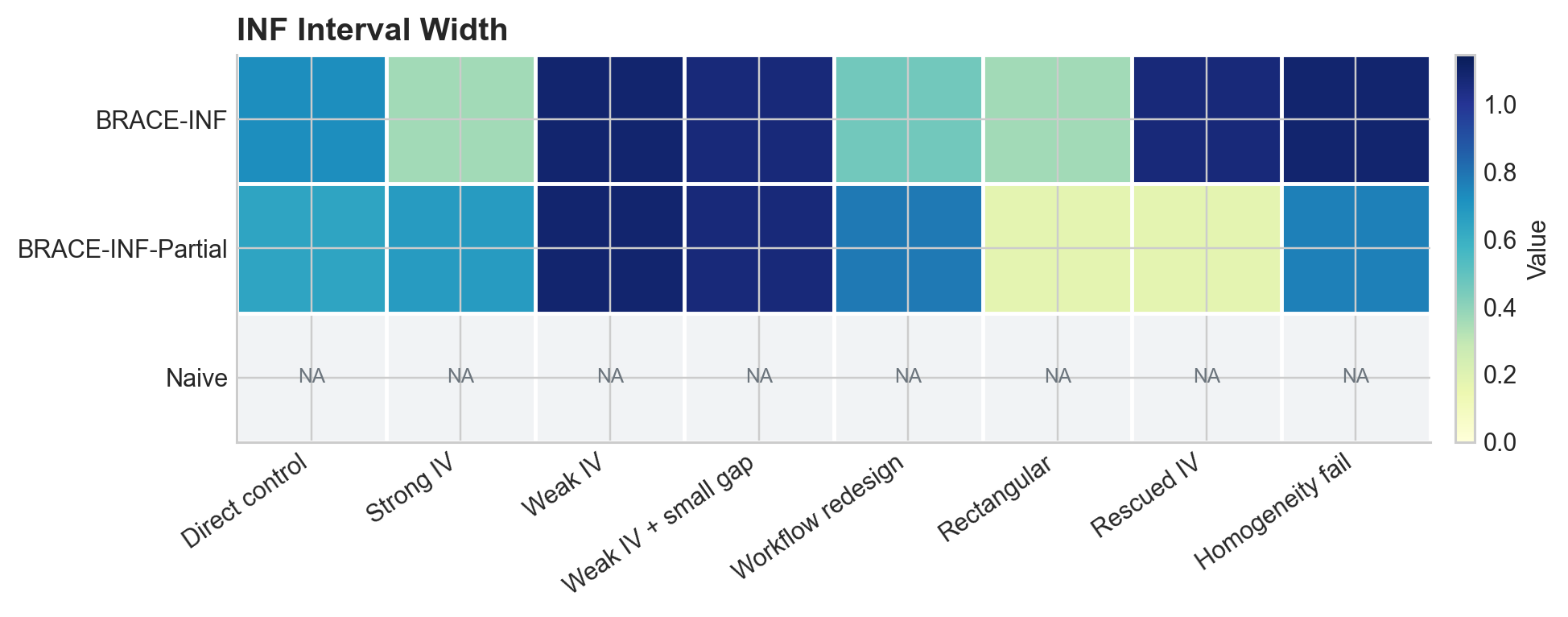}
\end{minipage}
\caption{Inference diagnostics. Certified-share and interval-width plots together show how the benchmark separates strong-ID, weak-ID, and overidentified environments. Wide intervals in weak-ID cases are a feature of the safety story, not a bug in the implementation.}
\end{figure}

\begin{figure}[p]
\centering
\begin{minipage}[t]{0.49\textwidth}
    \centering
    \includegraphics[width=\linewidth]{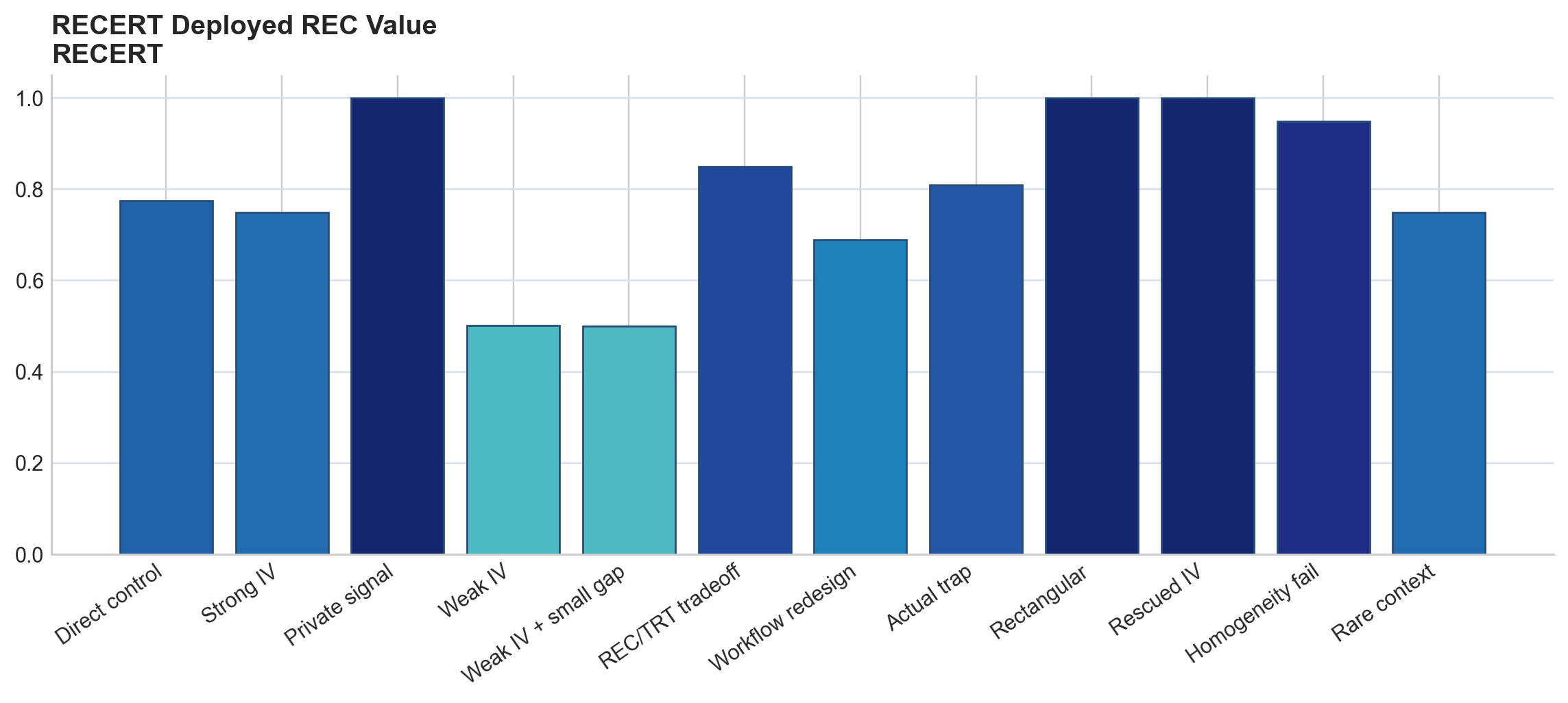}
\end{minipage}
\hfill
\begin{minipage}[t]{0.49\textwidth}
    \centering
    \includegraphics[width=\linewidth]{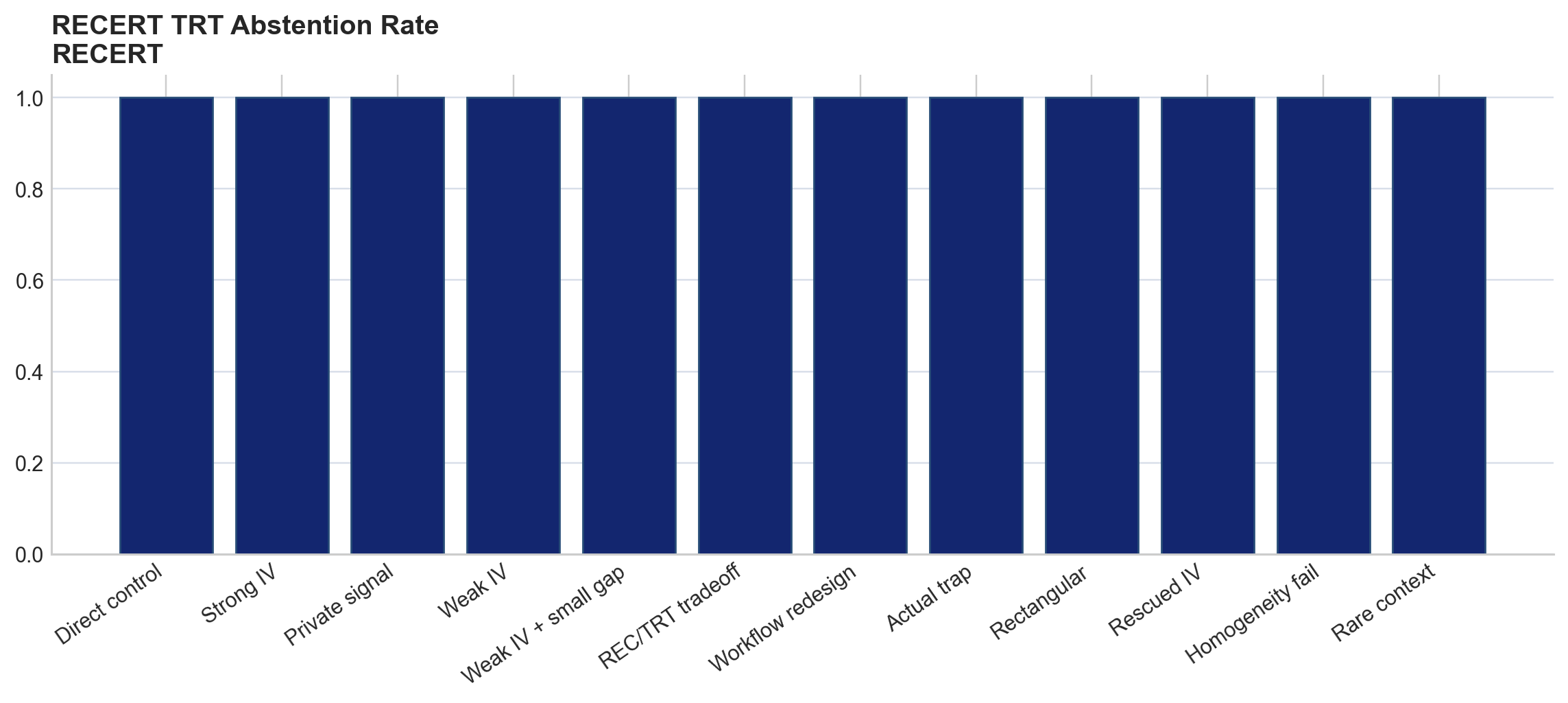}
\end{minipage}
\caption{RECERT behavior. Left: deployed REC value for the joint extension. Right: TRT abstention rate for the same method. This is the compact graphical version of the paper's intended operational stance: strong recommendation deployment together with structural caution.}
\end{figure}

\begin{figure}[p]
\centering
\includegraphics[width=0.72\textwidth]{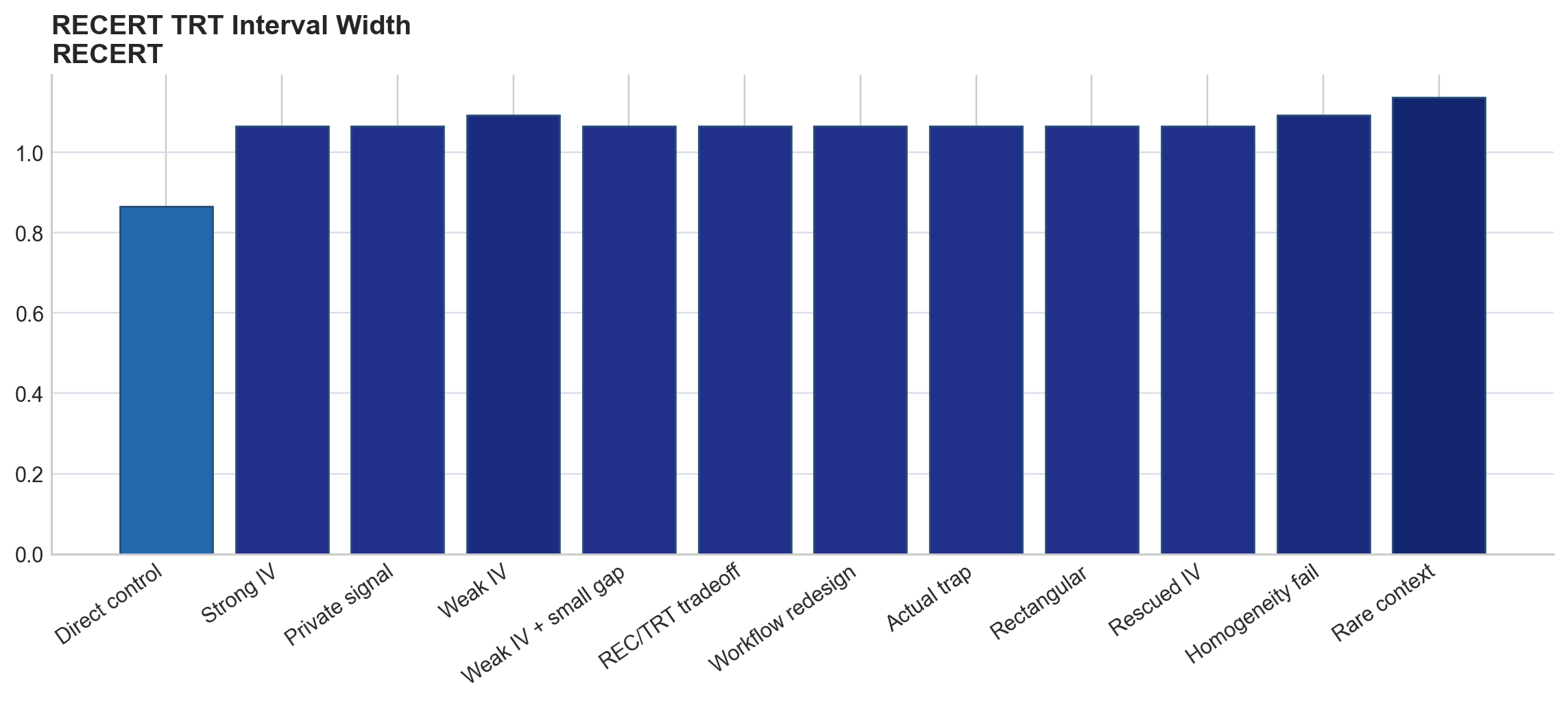}
\caption{TRT interval width for RECERT. The joint method is operationally useful precisely because its treatment side is interpreted as a certificate and diagnostic, not as a compulsory deployment rule.}
\end{figure}


\begin{thebibliography}{99}

\bibitem{athey2022}
Susan Athey, Undral Byambadalai, Vitor Hadad, Sanath Kumar Krishnamurthy, Weiwen Leung, and Joseph Jay Williams.
\newblock Contextual bandits in a survey experiment on charitable giving: Within-experiment outcomes versus policy learning.
\newblock \emph{arXiv preprint arXiv:2211.12004}, 2022.

\bibitem{angrist1996}
Joshua D. Angrist, Guido W. Imbens, and Donald B. Rubin.
\newblock Identification of causal effects using instrumental variables.
\newblock \emph{Journal of the American Statistical Association}, 91(434):444--455, 1996.

\bibitem{bloom1984}
Howard S. Bloom.
\newblock Accounting for no-shows in experimental evaluation designs.
\newblock \emph{Evaluation Review}, 8(2):225--246, 1984.

\bibitem{chen2023}
Siyu Chen, Yitan Wang, Zhaoran Wang, and Zhuoran Yang.
\newblock A unified framework of policy learning for contextual bandit with confounding bias and missing observations.
\newblock \emph{arXiv preprint arXiv:2303.11187}, 2023.

\bibitem{cook2024}
Thomas Cook, Alan Mishler, and Aaditya Ramdas.
\newblock Semiparametric efficient inference in adaptive experiments.
\newblock In \emph{Proceedings of the Third Conference on Causal Learning and Reasoning}, volume 236 of \emph{Proceedings of Machine Learning Research}, pages 1033--1064, 2024.

\bibitem{dalal2024}
Abhinandan Dalal, Patrick Bl\"obaum, Shiva Kasiviswanathan, and Aaditya Ramdas.
\newblock Anytime-valid inference for double/debiased machine learning of causal parameters.
\newblock \emph{arXiv preprint arXiv:2408.09598}, 2024.

\bibitem{ich1998}
International Council for Harmonisation.
\newblock E9 Statistical Principles for Clinical Trials.
\newblock Guidance for Industry, 1998.

\bibitem{ich2021}
International Council for Harmonisation.
\newblock E9(R1) Addendum on Estimands and Sensitivity Analysis in Clinical Trials to the Guideline on Statistical Principles for Clinical Trials.
\newblock Guidance for Industry, 2021.

\bibitem{imbens1994}
Guido W. Imbens and Joshua D. Angrist.
\newblock Identification and estimation of local average treatment effects.
\newblock \emph{Econometrica}, 62(2):467--475, 1994.

\bibitem{dellapenna2016}
Nicol\'as Della Penna, Mark D. Reid, and David Balduzzi.
\newblock Compliance-aware bandits.
\newblock \emph{arXiv preprint arXiv:1602.02852}, 2016.

\bibitem{kallus2018}
Nathan Kallus.
\newblock Instrument-armed bandits.
\newblock In \emph{Proceedings of Algorithmic Learning Theory}, volume 83 of \emph{Proceedings of Machine Learning Research}, pages 529--546, 2018.

\bibitem{kato2020}
Masahiro Kato, Takuya Ishihara, Junya Honda, and Yusuke Narita.
\newblock Efficient adaptive experimental design for average treatment effect estimation.
\newblock \emph{arXiv preprint arXiv:2002.05308}, 2020.

\bibitem{kato2021}
Masahiro Kato, Kenichiro McAlinn, and Shota Yasui.
\newblock The adaptive doubly robust estimator and a paradox concerning logging policy.
\newblock In \emph{Advances in Neural Information Processing Systems 34}, 2021.

\bibitem{karampatziakis2021}
Nikos Karampatziakis, Paul Mineiro, and Aaditya Ramdas.
\newblock Off-policy confidence sequences.
\newblock In \emph{Proceedings of the 38th International Conference on Machine Learning}, volume 139 of \emph{Proceedings of Machine Learning Research}, pages 5301--5310, 2021.

\bibitem{kveton2023}
Branislav Kveton, Yi Liu, Johan Matteo Kruijssen, and Yisu Nie.
\newblock Non-compliant bandits.
\newblock In \emph{Proceedings of the 32nd ACM International Conference on Information and Knowledge Management}, pages 1138--1147, 2023.

\bibitem{liang2023}
Biyonka Liang and Iavor Bojinov.
\newblock An experimental design for anytime-valid causal inference on multi-armed bandits.
\newblock \emph{arXiv preprint arXiv:2311.05794}, 2023.

\bibitem{ngo2021}
Dung Daniel T. Ngo, Logan Stapleton, Vasilis Syrgkanis, and Zhiwei Steven Wu.
\newblock Incentivizing compliance with algorithmic instruments.
\newblock In \emph{Proceedings of the 38th International Conference on Machine Learning}, volume 139 of \emph{Proceedings of Machine Learning Research}, pages 8045--8055, 2021.

\bibitem{oprescu2024}
Miruna Oprescu and Nathan Kallus.
\newblock Estimating heterogeneous treatment effects by combining weak instruments and observational data.
\newblock In \emph{Advances in Neural Information Processing Systems 37}, 2024.
\newblock Also available as \emph{arXiv preprint arXiv:2406.06452}.

\bibitem{oprescu2025}
Miruna Oprescu, Brian M. Cho, and Nathan Kallus.
\newblock Efficient adaptive experimentation with noncompliance.
\newblock \emph{arXiv preprint arXiv:2505.17468}, 2025.

\bibitem{shao2024}
Daqian Shao, Ashkan Soleymani, Francesco Quinzan, and Marta Kwiatkowska.
\newblock Learning decision policies with instrumental variables through double machine learning.
\newblock In \emph{Proceedings of the 41st International Conference on Machine Learning}, 2024.
\newblock Also available as \emph{arXiv preprint arXiv:2405.08498}.

\bibitem{sommer1991}
Alfred Sommer and Scott L. Zeger.
\newblock On estimating efficacy from clinical trials.
\newblock \emph{Statistics in Medicine}, 10(1):45--52, 1991.

\bibitem{waudby2024}
Ian Waudby-Smith, Lili Wu, Aaditya Ramdas, Nikos Karampatziakis, and Paul Mineiro.
\newblock Anytime-valid off-policy inference for contextual bandits.
\newblock \emph{ACM/IMS Journal of Data Science}, 1(3):1--42, 2024.

\bibitem{zelen1979}
Marvin Zelen.
\newblock A new design for randomized clinical trials.
\newblock \emph{New England Journal of Medicine}, 300(22):1242--1245, 1979.

\end{thebibliography}
\end{document}